\documentclass[preprint,10 pt,times]{elsarticle}
\usepackage[top=1in, bottom=1in, left=1.25in, right=1.25in]{geometry}
\usepackage{amsmath,amssymb,bm}
\usepackage{amsthm}
\newtheorem{remark}{Remark}
\usepackage{graphicx}
\usepackage{booktabs}
\usepackage{multirow}
\usepackage{xcolor}
\usepackage{array}
\usepackage{url}
\usepackage{float}
\usepackage{placeins}
\usepackage{stfloats}
\usepackage{tikz}
\usepackage{algorithm}
\usepackage{algorithmic}
\usepackage{textgreek}
\usepackage{hyperref}
\hypersetup{colorlinks=true,linkcolor=blue,citecolor=blue,urlcolor=blue,
	hypertexnames=false}
\newcommand{\rev}[1]{\textcolor{black}{#1}}

\newtheorem{proposition}{Proposition}

\usetikzlibrary{arrows.meta, positioning, calc}

\journal{}
\makeatletter
\def\ps@pprintTitle{%
  \let\@oddhead\@empty
  \let\@evenhead\@empty
  \def\@oddfoot{\hfil\thepage\hfil}%
  \let\@evenfoot\@oddfoot}
\makeatother

\begin{document}

\begin{frontmatter}

\title{When Does Perturbation Consistency Add Information Beyond Softmax Confidence? Theory and Evidence for the Decision Consistency Framework}

\author[DUIET]{Amit Kalita}
\ead{contactamitkalita@gmail.com}

\author[DUIET]{Moirangthem Tiken Singh\corref{cor1}}
\ead{tiken.m@dibru.ac.in}
\cortext[cor1]{Corresponding author.}

\address[DUIET]{Department of Computer Science and Engineering, Dibrugarh University Institute of Engineering and Technology, Dibrugarh University, Dibrugarh, Assam, India}

\begin{abstract}
\rev{Perturbation consistency scores judge whether a prediction survives small input changes. The Decision Consistency Framework (DCF) scores each image by label stability (LS) and confidence deviation (CD) under six mild operators, combines them into a consistency score (CS), and summarises a model by the gap between accuracy and mean consistency (CAG). We ask when these scores carry information that the clean softmax output lacks. Across 201 evaluations with three seeds per configuration, CS detected misclassifications on clean data less well than the maximum softmax probability (MSP) in every model family. For frozen CNNs the deficit was 0.057 in area under the ROC curve (AUROC; 95\% CI 0.043 to 0.073), at 8 to 15 times the compute. A label flip requires the logit margin to be at most the square root of two times the logit displacement, and at a fixed margin the displacement predicted errors at AUROC 0.515. The same inequality gives a label-free split-conformal no-flip certificate, never violated in 84 runs. Consistency did add information where confidence is misled: under noise corruptions of CIFAR-10-C it beat the margin, and under projected-gradient attacks it separated clean from attacked inputs with AUROC 0.83 against 0.37 for MSP at a budget of 2/255, as pre-specified. The gain under noise did not transfer to unseen shift families. The large negative CAG on OCTMNIST persists under full fine-tuning and in-domain training and is partly a class-prior effect: correcting the prior recovers about 40\% of the gap.}
\end{abstract}

\begin{keyword}
decision consistency \sep
misclassification detection \sep
conformal prediction \sep
distribution shift \sep
label shift \sep
adversarial detection
\end{keyword}

\end{frontmatter}

\section{Introduction}
\label{sec:intro}

Accuracy on a held-out test set says how often a classifier is right on average. It does not say which of its predictions should be trusted. In deployment the second question matters more. A radiology triage tool or an inspection camera has to decide, image by image, whether to act on the output or pass the case to a person~\cite{amodei2016concrete,tjoa2020survey}. The usual answer is a confidence score. The maximum softmax probability (MSP)~\cite{hendrycks2017baseline}, the logit margin, the predictive entropy and the energy score~\cite{liu2020energy} are all computed from one forward pass and are difficult to outperform for misclassification detection~\cite{jaeger2023call,galil2023what,cattelan2024fix}.

\rev{These scores read only the clean output. They cannot show whether a prediction would survive a small, harmless change to the input. A second family of scores addresses this gap. Test-time augmentation agreement~\cite{ayhan2018tta,shanmugam2021tta}, the flip probabilities of ImageNet-P~\cite{hendrycks2019benchmarking} and feature squeezing for adversarial detection~\cite{xu2018feature} all follow this idea. The Decision Consistency Framework (DCF) is one instance of it. For each test image, DCF applies six mild operators: rotation, brightness, Gaussian noise, contrast, blur and JPEG compression. It records how often the predicted label is kept (label stability, LS) and how much the top-class confidence moves (confidence deviation, CD). It then combines the two into a consistency score (CS). At the model level, DCF reports the consistency-accuracy gap (CAG), the difference between clean accuracy and mean CS.}

\rev{The appeal of such scores is intuitive. A prediction that flips under a rotation of two degrees looks fragile, and one that never moves looks dependable. The cost is also clear, because each operator needs one extra forward pass. That cost is justified only if consistency carries information that the clean output does not already contain. Arguments for such scores usually rest on Lipschitz-style bounds and on flip frequencies. They seldom rest on a comparison with the confidence the scores are meant to improve on. The central question is therefore:}

\begin{quote}
\emph{When does perturbation consistency add information about the correctness of a prediction beyond what the clean softmax output already provides?}
\end{quote}

\rev{We answer it with a theoretical argument and a pre-specified set of experiments. The answer is mostly negative, with two exceptions. The contributions follow the order of the argument.}

\begin{enumerate}
\item \textbf{\rev{A mechanism for redundancy.}} \rev{We prove that a label flip under any operator implies that the clean logit margin is at most the square root of two times the norm of the logit displacement. It follows that LS and CD can add information only through the dependence of the displacement on correctness at a fixed clean output. Measured directly within margin bins, that dependence is negligible on clean data.}

\item \textbf{\rev{A finite-sample no-flip certificate.}} \rev{The same inequality yields a split-conformal rule that certifies an image from its clean margin alone. The calibration set needs no labels. The probability that an image is certified and still flips is bounded by a chosen level, and this bound held in every run at every level tested. Unlike a Lipschitz flip-probability bound, which is vacuous in this setting, the rule needs neither a Lipschitz constant nor a shared perturbation budget.}

\item \textbf{\rev{A broad comparison against confidence baselines.}} \rev{The mechanism predicts that consistency is redundant on clean data, and the comparison tests this prediction. It covers frozen and natively trained CNNs, vision transformers, a model zoo and last-layer ensembles, evaluated on several natural-image and medical-image datasets. CS was below MSP on average for clean error detection in every model family. LS alone was close to chance among high-confidence predictions. No reformulation of CD and no choice of the weight between LS and CD closed the gap.}

\item \textbf{\rev{A boundary condition.}} \rev{The same mechanism indicates where redundancy should fail, namely where the clean output is itself misled. Consistency signals did add information in two such settings. The first is noise-type corruptions of natural images, where the gain grew with MSP over-confidence. The second is adversarial inputs, where a pre-specified prediction was confirmed in every run. The gain does not transfer to shift families not seen in training. An attacker who penalises instability can evade the detector, at the price of a much lower attack success rate.}

\item \textbf{\rev{A partial explanation of ``stable but wrong'' predictions.}} \rev{The last finding concerns CAG. Models trained on a retinal imaging task had a large negative CAG, which means that many wrong predictions were stable. The pattern survives longer head training, full fine-tuning and training from scratch on the target domain, and at matched accuracy the in-domain model is no better. Part of it is a class-prior shift between the official splits. Correcting the prior recovers part of the gap, and inducing the same kind of shift on a natural-image benchmark reproduces it.}
\end{enumerate}

\rev{These findings suggest a rule for practice. Use MSP or the margin for clean error detection. Use the conformal certificate when a cheap guarantee against small perturbations is needed. Reserve consistency checks for settings where confidence itself is known to be corrupted.}

The rest of the paper is organised as follows. Section~\ref{sec:related} reviews related work. Section~\ref{sec:framework} defines the DCF quantities and states the theory. Section~\ref{sec:protocol} describes the protocol, including the perturbation audit and the pre-specified predictions. Section~\ref{sec:results} reports the results, Section~\ref{sec:discussion} discusses their implications and limitations, and Section~\ref{sec:conclusion} concludes.

\section{Related Work}
\label{sec:related}

\rev{This section follows the structure of the argument. It first reviews the single-pass confidence scores that serve as baselines. It then turns to perturbation-based scores, to the behaviour of uncertainty estimates under shift, and to the margin and conformal tools that underlie the theory. It ends with adversarial detection and label shift, two settings in which confidence can be misleading.}

\subsection{Misclassification detection from a single forward pass}

\rev{Hendrycks and Gimpel~\cite{hendrycks2017baseline} showed that MSP separates correct from incorrect predictions well above chance. It has remained a strong baseline since then. Later scores reuse the same logits. They include the margin between the two largest logits, the predictive entropy, the energy score~\cite{liu2020energy}, the maximum logit~\cite{hendrycks2022scaling} and DOCTOR~\cite{granese2021doctor}. Temperature scaling~\cite{guo2017calibration} improves calibration without changing the ranking of MSP. Careful evaluations found that elaborate alternatives rarely outperform these baselines for failure detection~\cite{jaeger2023call,galil2023what}. They also found that the remaining weaknesses of MSP can often be fixed post hoc~\cite{cattelan2024fix,gomes2024relative}.}

\rev{Other families of methods address related problems. Feature-space scores such as the Mahalanobis distance~\cite{lee2018simple} and deterministic uncertainty methods~\cite{mukhoti2023ddu} target out-of-distribution (OOD) inputs~\cite{yang2024generalized} more than misclassification. Training-time methods such as OpenMix~\cite{zhu2023openmix} change the model and fall outside the post-hoc setting considered here. Recent work applies failure detection to vision-language models~\cite{zeng2025fsmisd} and to concept-based explanations of errors~\cite{nguyen2025concepts}. A benchmark of confidence scores on medical image classification under distribution shift found that no single score dominated across settings~\cite{steinmetz2026medfail}. The single-pass scores therefore form the baselines that any perturbation-based score must outperform.}

\subsection{Perturbation consistency and test-time augmentation}

\rev{Perturbation-based scores start from a different idea. They ask whether a prediction survives a change to the input. Averaging predictions over augmented views, known as test-time augmentation (TTA), can improve accuracy and uncertainty estimates~\cite{ayhan2018tta,shanmugam2021tta}. ImageNet-C and ImageNet-P~\cite{hendrycks2019benchmarking} measure average-case robustness to corruptions and the probability that a prediction flips along a perturbation sequence. The motivation is that convolutional networks can change their output under shifts of a few pixels or small rotations~\cite{azulay2019why,engstrom2019rotation}.}

\rev{Consistency under perturbation is also used in other roles. It serves as a training signal~\cite{miyato2019virtual,hendrycks2020augmixsimpledataprocessing,zhang2022memo} and as an objective for test-time adaptation~\cite{wang2021tent}. For detection, feature squeezing~\cite{xu2018feature} flags adversarial inputs whose prediction changes under bit-depth reduction or smoothing. DCF belongs to this family. It differs from TTA in using the agreement and the confidence change as a reliability score rather than averaging the predictions. TTA and the single-pass scores are therefore both relevant comparisons.}

\subsection{Uncertainty under distribution shift}

\rev{Reliability scores are most needed when the data change, and uncertainty estimates degrade under shift~\cite{ovadia2019trust}. Deep ensembles~\cite{lakshminarayanan2017simple} and Monte Carlo dropout~\cite{gal2016dropout} remain common references. At the model level, accuracy under shift is often linearly related to in-distribution accuracy~\cite{miller2021accuracy,taori2020measuring}. Agreement between models, or the average confidence, can predict accuracy on unlabelled shifted data~\cite{jiang2022disagreement,baek2022agreement,garg2022atc,guillory2021doc}. A clean-data score that is claimed to predict behaviour under shift therefore has to outperform clean accuracy itself as a predictor. A correlation with robustness is not sufficient.}

\subsection{Margins, Lipschitz bounds and conformal guarantees}

\rev{Guarantees about prediction changes rest on the logit margin. Lipschitz-margin arguments give deterministic certificates. If the logit margin exceeds a constant times the product of the Lipschitz constant and the perturbation budget, no perturbation within the budget can flip the label~\cite{hein2017formal,tsuzuku2018lipschitz,weng2018clever,fazlyab2023certified}. Randomised smoothing gives probabilistic certificates for a smoothed classifier~\cite{cohen2019certified}. Both approaches need either a Lipschitz constant or a change to the classifier.}

\rev{Conformal prediction offers a different route. It gives distribution-free, finite-sample guarantees under exchangeability~\cite{angelopoulos2021uncertainty,angelopoulos2023gentle,lei2018distribution}, and its behaviour when exchangeability fails is well studied~\cite{tibshirani2019covariate,barber2023beyond}. The certificate proposed here combines the two approaches. The margin inequality is deterministic. The threshold is set by a conformal quantile of measured displacements, so no Lipschitz constant is needed.}

\subsection{Adversarial detection and adaptive attacks}

\rev{Adversarial inputs are a setting in which confidence is deliberately misled. Projected gradient descent (PGD)~\cite{madry2017towards} is the standard first-order attack. Ensembles of parameter-free attacks are used when robustness itself is the claim~\cite{croce2020autoattack}. Detectors that look convincing against fixed attacks often fail once the attacker knows the detector~\cite{carlini2017evaluating,athalye2018obfuscated,tramer2020adaptive}. Non-differentiable components such as JPEG compression must be attacked through approximations such as the backward pass differentiable approximation (BPDA)~\cite{athalye2018obfuscated}. Evidence for a detector therefore requires an adaptive attack as well as a standard one.}

\subsection{Label shift}

\rev{Class priors are a second source of misleading confidence. When class priors differ between training and test data while the class-conditional distributions do not, a classifier's posteriors are biased towards the training prior. The prior can be re-estimated without labels by expectation-maximisation~\cite{saerens2002adjusting} or by black-box shift estimation~\cite{lipton2018bbse}. Calibrated maximum likelihood is a strong choice~\cite{alexandari2020mlls,garg2020unified}. Class imbalance more generally leaves minority classes with low recall~\cite{zhang2023longtail}. This literature makes label shift a candidate explanation for confident but wrong predictions.}

\section{Decision consistency: definitions and theory}
\label{sec:framework}

\subsection{Notation}
\label{subsec:notation}

Let $z(\mathbf{x})\in\mathbb{R}^{J}$ be the logits of a classifier with $J$ classes for an input $\mathbf{x}$, $p(\mathbf{x})=\mathrm{softmax}(z(\mathbf{x}))$ the class probabilities, $\hat{y}(\mathbf{x})=\arg\max_j z_j(\mathbf{x})$ the prediction and $c(\mathbf{x})=\max_j p_j(\mathbf{x})$ the MSP. The clean logit margin is
\begin{equation}
\gamma(\mathbf{x}) = z_{\hat{y}(\mathbf{x})}(\mathbf{x}) - \max_{j\neq\hat{y}(\mathbf{x})} z_j(\mathbf{x}) \;\ge 0 .
\label{eq:margin}
\end{equation}
\rev{A perturbation set $\mathcal{T}=\{T_1,\dots,T_K\}$ contains $K$ operators. An operator may be random, as Gaussian noise is. In that case its randomness is drawn independently for every image.} For image $\mathbf{x}_i$ we write \rev{$\hat{y}_i=\hat{y}(\mathbf{x}_i)$, $c_i=c(\mathbf{x}_i)$, $\gamma_i=\gamma(\mathbf{x}_i)$,} $\mathbf{x}_i^{(k)}=T_k(\mathbf{x}_i)$, $\hat{y}_i^{(k)}=\hat{y}(\mathbf{x}_i^{(k)})$, $c_i^{(k)}=c(\mathbf{x}_i^{(k)})$, and
\begin{equation}
\Delta_{i,k} = z(\mathbf{x}_i^{(k)}) - z(\mathbf{x}_i) \in \mathbb{R}^{J}
\label{eq:disp}
\end{equation}
for the logit displacement caused by operator $k$. \rev{We say that operator $k$ \emph{flips} image $i$ if $\hat{y}_i^{(k)}\neq\hat{y}_i$. With ties, a flip means that some class $j\neq\hat{y}_i$ satisfies $z_j(\mathbf{x}_i^{(k)})\ge z_{\hat{y}_i}(\mathbf{x}_i^{(k)})$. Correctness is $Z_i=\mathbf{1}[\hat{y}_i=y_i]$, where $\mathbf{1}[\cdot]$ is the indicator function and $y_i$ is the true label.}

\subsection{The DCF quantities}
\label{subsec:metrics}

\rev{DCF summarises the behaviour of a prediction under the $K$ operators with three image-level quantities and one model-level quantity. Label stability (LS) is the fraction of operators that keep the clean prediction,}
\begin{equation}
S_i = \frac{1}{K}\sum_{k=1}^{K}\mathbf{1}\!\left[\hat{y}_i^{(k)}=\hat{y}_i\right]\in[0,1].
\label{eq:ls}
\end{equation}
\rev{Confidence deviation (CD) measures the complementary effect on confidence. It is the mean absolute change of the top-class confidence,}
\begin{equation}
D_i = \frac{1}{K}\sum_{k=1}^{K}\bigl|c_i - c_i^{(k)}\bigr|\in[0,1].
\label{eq:cd}
\end{equation}
\rev{Equation~(\ref{eq:cd}) compares maximum probabilities even when the winning class changes. A prediction can therefore flip from one class to another with the same confidence and still have $D_i\approx 0$. Equation~(\ref{eq:cd}) is the reference definition used throughout.}

\rev{The consistency score (CS) combines the two quantities as a convex combination,}
\begin{equation}
C_i = \alpha S_i + (1-\alpha)(1-D_i),\qquad \alpha\in[0,1],
\label{eq:cs}
\end{equation}
\rev{with $\alpha=0.5$ unless stated otherwise. A single value of $C_i$ does not determine $S_i$ and $D_i$, because many pairs give the same score. LS and CD are therefore always reported alongside CS, and CS is read only as a ranking score.}

\rev{At the model level, the consistency-accuracy gap (CAG) is}
\begin{equation}
\mathrm{CAG} = \mathrm{Acc} - \bar{C},\qquad \bar{C}=\frac{1}{N}\sum_{i=1}^{N}C_i ,
\label{eq:cag}
\end{equation}
\rev{where Acc is the accuracy on the full test set and $\bar{C}$ is averaged over $N$ evaluation images. Accuracy measures agreement with the true label, while CS measures agreement of the model with itself. The difference of the two is therefore not a calibration measure, and it carries no guarantee. CAG is used only as a descriptive flag. For a tolerance $\delta$, we call a model \emph{accuracy-dominant} if $\mathrm{CAG}>\delta$, \emph{consistency-dominant} if $\mathrm{CAG}<-\delta$ and \emph{balanced} otherwise. The label ``balanced'' makes no calibration claim. Calibration refers to the agreement between confidence and accuracy~\cite{guo2017calibration}.}

\subsection{What a flip implies about the margin}
\label{subsec:flip}

\rev{The first result connects a label flip to the clean margin. It holds for every image and every operator, and it requires no assumption about the classifier.}

\begin{proposition}[Flip implication]
\label{prop:flip}
For every image $\mathbf{x}_i$ and operator $T_k$, if $T_k$ flips $\mathbf{x}_i$ then
\begin{equation}
\gamma(\mathbf{x}_i)\;\le\; \bigl|\Delta_{i,k,\hat{y}_i}\bigr| + \max_{j\neq\hat{y}_i}\bigl|\Delta_{i,k,j}\bigr| \;\le\; \sqrt{2}\,\bigl\|\Delta_{i,k}\bigr\|_2 .
\label{eq:flip}
\end{equation}
Consequently $1-S_i \le \frac{1}{K}\sum_{k}\mathbf{1}\bigl[\gamma(\mathbf{x}_i)\le\sqrt{2}\|\Delta_{i,k}\|_2\bigr]$, and $S_i=1$ whenever $\gamma(\mathbf{x}_i)>\sqrt{2}\max_k\|\Delta_{i,k}\|_2$.
\end{proposition}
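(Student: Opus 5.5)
The argument is elementary and I would carry it out directly from the definitions in Section~\ref{subsec:notation}. Fix an image $\mathbf{x}_i$ and an operator $T_k$ that flips it, and write $\hat{y}=\hat{y}_i$, $z=z(\mathbf{x}_i)$, $z'=z(\mathbf{x}_i^{(k)})=z+\Delta_{i,k}$. By the tie convention, a flip means there is some $j^\ast\neq\hat{y}$ with $z'_{j^\ast}\ge z'_{\hat{y}}$, that is,
\begin{equation*}
z_{j^\ast}+\Delta_{i,k,j^\ast}\;\ge\; z_{\hat{y}}+\Delta_{i,k,\hat{y}} .
\end{equation*}
Rearranging gives $z_{\hat{y}}-z_{j^\ast}\le \Delta_{i,k,j^\ast}-\Delta_{i,k,\hat{y}}$. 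The margin in (\ref{eq:margin}) subtracts the \emph{largest} non-predicted logit, so $\gamma(\mathbf{x}_i)=z_{\hat{y}}-\max_{j\neq\hat{y}}z_j\le z_{\hat{y}}-z_{j^\ast}$. Bounding the right-hand side by the triangle inequality, $\Delta_{i,k,j^\ast}-\Delta_{i,k,\hat{y}}\le|\Delta_{i,k,\hat{y}}|+|\Delta_{i,k,j^\ast}|\le|\Delta_{i,k,\hat{y}}|+\max_{j\neq\hat{y}}|\Delta_{i,k,j}|$. This is the first inequality in (\ref{eq:flip}).

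For the second inequality, let $a=|\Delta_{i,k,\hat{y}}|$ and $b=\max_{j\neq\hat{y}}|\Delta_{i,k,j}|$, where $b$ is attained at some index $j'\neq\hat{y}$. Cauchy--Schwarz in $\mathbb{R}^2$ gives $a+b\le\sqrt{2}\sqrt{a^2+b^2}$. Since $\hat{y}$ and $j'$ are distinct coordinates, $a^2+b^2\le\|\Delta_{i,k}\|_2^2$. The only point that needs care, and it is mild, is this distinctness: the two terms must come from different coordinates so that their squares are not double-counted. That is why the maximum runs over $j\neq\hat{y}$.

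The consequences follow by summing indicators. The first part shows $\mathbf{1}[\hat{y}_i^{(k)}\neq\hat{y}_i]\le\mathbf{1}[\gamma(\mathbf{x}_i)\le\sqrt{2}\|\Delta_{i,k}\|_2]$ for each $k$. Averaging over $k$ and using $1-S_i=\frac1K\sum_k\mathbf{1}[\hat{y}_i^{(k)}\neq\hat{y}_i]$ from (\ref{eq:ls}) gives the stated bound on $1-S_i$. Finally, if $\gamma(\mathbf{x}_i)>\sqrt{2}\max_k\|\Delta_{i,k}\|_2$, then every indicator on the right vanishes, so $1-S_i\le0$. Since $S_i\le1$, this forces $S_i=1$.

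Random operators cause no difficulty. The argument is pointwise for each realised perturbed input, so it holds for every draw of the operator's randomness.
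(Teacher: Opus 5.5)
Your proof is correct and follows the paper's argument essentially step for step. You bound the margin by $z_{\hat{y}}-z_{j^\ast}$ and apply the triangle inequality for the first inequality, use Cauchy--Schwarz on the distinct coordinates $\hat{y}$ and $j'$ to get the $\sqrt{2}$ bound, and average the per-operator indicators to obtain the statement about $S_i$. Your remarks on the tie convention and on random operators are small but valid elaborations of points the paper leaves implicit.
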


\begin{proof}
Write $\hat{y}=\hat{y}_i$, $z=z(\mathbf{x}_i)$, $z'=z(\mathbf{x}_i^{(k)})$ and $\Delta=z'-z$. If $T_k$ flips $\mathbf{x}_i$, there is a class $j^\ast\neq\hat{y}$ with $z'_{j^\ast}\ge z'_{\hat{y}}$. By the definition of the margin, $z_{j^\ast}\le\max_{j\neq\hat{y}}z_j = z_{\hat{y}}-\gamma(\mathbf{x}_i)$. Hence
\[
0 \;\ge\; z'_{\hat{y}}-z'_{j^\ast} \;=\; (z_{\hat{y}}-z_{j^\ast}) + (\Delta_{\hat{y}}-\Delta_{j^\ast}) \;\ge\; \gamma(\mathbf{x}_i) + \Delta_{\hat{y}}-\Delta_{j^\ast},
\]
so $\gamma(\mathbf{x}_i)\le\Delta_{j^\ast}-\Delta_{\hat{y}}\le|\Delta_{\hat{y}}|+|\Delta_{j^\ast}|\le|\Delta_{\hat{y}}|+\max_{j\neq\hat{y}}|\Delta_j|$, which is the first inequality. For the second, let $a=|\Delta_{\hat{y}}|$ and $b=\max_{j\neq\hat{y}}|\Delta_j|=|\Delta_{j'}|$ for some $j'\neq\hat{y}$. By the Cauchy-Schwarz inequality, $a+b\le\sqrt{2}\sqrt{a^2+b^2}$, and $a^2+b^2=\Delta_{\hat{y}}^2+\Delta_{j'}^2\le\|\Delta\|_2^2$ because $\hat{y}\neq j'$ are two distinct coordinates. Taking square roots gives $a+b\le\sqrt{2}\|\Delta\|_2$. The statement about $S_i$ follows by applying the implication to each operator and averaging the indicators over $k$.
\end{proof}

\rev{Bounding the change of the winning logit and of the runner-up separately by $\|\Delta_{i,k}\|_2$ gives the constant 2. Combining the two coordinates as in the proof gives $\sqrt{2}$, which is tight (take $\Delta$ supported on $\hat{y}$ and one competitor with equal and opposite entries). If the logit map is $L$-Lipschitz and $\|\mathbf{x}_i^{(k)}-\mathbf{x}_i\|_2\le\varepsilon$, then $\|\Delta_{i,k}\|_2\le L\varepsilon$ and Eq.~(\ref{eq:flip}) gives the familiar certificate $\gamma>\sqrt{2}L\varepsilon\Rightarrow$ no flip~\cite{tsuzuku2018lipschitz}.}

\rev{A probabilistic version of this certificate is not informative in this setting.}

\begin{remark}[\rev{Limitations of a probabilistic Lipschitz bound}]
\label{rem:withdraw}
A bound of the form $P(\hat{y}_i^{(k)}\neq\hat{y}_i)\le\min(1,2L\varepsilon/\gamma_i)$, obtained from Markov's inequality, is vacuous: when $\gamma_i\le 2L\varepsilon$ it equals one, and when $\gamma_i>2L\varepsilon$ the deterministic certificate already gives zero. It also needs one budget $\varepsilon$ shared by all operators, \rev{which operators of different kinds do not generally share}. \rev{In addition, checking that images with $\gamma_i>\sqrt{2}\|\Delta_{i,k}\|_2$ do not flip, with $\Delta_{i,k}$ measured on the same pair, cannot validate the bound, because Proposition~\ref{prop:flip} guarantees the outcome. This check serves only as an implementation test. The conformal certificate avoids these limitations.}
\end{remark}

\subsection{A split-conformal no-flip certificate}
\label{subsec:conformal}

\rev{Proposition~\ref{prop:flip} needs the displacement of the image being certified, and that displacement is available only after the perturbed image has been processed. A more practical rule uses only the clean margin. Its threshold is calibrated on the worst-case scaled displacement of an image,}
\begin{equation}
R(\mathbf{x}_i) = \sqrt{2}\,\max_{k}\,\bigl\|\Delta_{i,k}\bigr\|_2 ,
\label{eq:score}
\end{equation}
which depends on the image and on the draw of the random operators, but not on its label.

\begin{proposition}[Conformal no-flip certificate]
\label{prop:conformal}
Let $\mathbf{x}_1,\dots,\mathbf{x}_n$ be calibration images and $\mathbf{x}_{n+1}$ a test image such that $R(\mathbf{x}_1),\dots,R(\mathbf{x}_{n+1})$ are exchangeable. For $\beta\in(0,1)$ let $m=\lceil(n+1)(1-\beta)\rceil$ and let $\hat{q}$ be the $m$-th smallest of $R(\mathbf{x}_1),\dots,R(\mathbf{x}_n)$, with $\hat{q}=\infty$ if $m>n$. Certify $\mathbf{x}_{n+1}$ if $\gamma(\mathbf{x}_{n+1})>\hat{q}$. Then
\begin{equation}
P\bigl(\mathbf{x}_{n+1}\text{ is certified and some operator flips it}\bigr)\;\le\;P\bigl(R(\mathbf{x}_{n+1})>\hat{q}\bigr)\;\le\;\beta ,
\label{eq:conformal}
\end{equation}
where the probability is over the calibration images, the test image and the operator randomness.
\end{proposition}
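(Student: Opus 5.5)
The proof has two inequalities, handled separately. The first is a deterministic event inclusion that follows from Proposition~\ref{prop:flip}. The second is the standard split-conformal rank argument applied to the scores $R(\mathbf{x}_1),\dots,R(\mathbf{x}_{n+1})$.

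\emph{First inequality.} Fix any realisation of the calibration images, the test image and the operator randomness. Suppose $\mathbf{x}_{n+1}$ is certified and some operator $T_k$ flips it. Certification means $\gamma(\mathbf{x}_{n+1})>\hat{q}$. The flip, through Eq.~(\ref{eq:flip}), gives
\[
\gamma(\mathbf{x}_{n+1})\le\sqrt{2}\,\|\Delta_{n+1,k}\|_2\le R(\mathbf{x}_{n+1}).
\]
Chaining these yields $R(\mathbf{x}_{n+1})>\hat{q}$. So the event ``certified and flipped'' is contained in $\{R(\mathbf{x}_{n+1})>\hat{q}\}$, and monotonicity of probability gives the first inequality. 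When $\hat{q}=\infty$ no image is certified and the event is empty, so that case is trivial. One point needs care: the flip must be evaluated with the same operator draw that enters $R(\mathbf{x}_{n+1})$. I would state explicitly that the displacements defining $R$ use the same randomness as the flip event. This is how Eq.~(\ref{eq:score}) is set up, as ``depends on the draw of the random operators''.

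\emph{Second inequality.} Write $R_i=R(\mathbf{x}_i)$. If $m>n$, then $\hat{q}=\infty$ and the probability is $0$. Otherwise, the event $R_{n+1}>\hat{q}$, where $\hat{q}$ is the $m$-th smallest of $R_1,\dots,R_n$, means that fewer than $m$ of the $n+1$ values are at least $R_{n+1}$ in the ordering sense. More concretely, it says $R_{n+1}$ strictly exceeds at least $m$ of the calibration scores. That forces the rank of $R_{n+1}$ among all $n+1$ scores, counted with ties broken against it, to be at least $m+1$.

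By exchangeability, the probability that $R_{n+1}$ strictly exceeds the $m$-th smallest of the others is at most $(n+1-m)/(n+1)$. To see this, let $N_j$ be the number of $i\ne j$ with $R_i<R_j$. Then
\[
\sum_{j}\mathbf{1}[N_j\ge m]\le n+1-m,
\]
because at most $n+1-m$ of the values can have at least $m$ values strictly below them. Taking expectations and using exchangeability, each term has the same probability. Hence $P(N_{n+1}\ge m)\le (n+1-m)/(n+1)$. Finally, $m\ge (n+1)(1-\beta)$ gives $(n+1-m)/(n+1)\le\beta$.

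The main obstacle is this counting step under ties. Exchangeability alone does not give a uniform rank when ties occur. Using strict inequalities in $N_j$ sidesteps this, since the deterministic bound on $\sum_j\mathbf{1}[N_j\ge m]$ holds for any multiset with ties. No labels enter the argument, which confirms that the calibration set can be unlabelled.
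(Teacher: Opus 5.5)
Your proof is correct. The first inequality is the paper's argument: a pointwise event inclusion from Proposition~\ref{prop:flip}. Your remark that the flip and $R(\mathbf{x}_{n+1})$ must use the same operator draw makes explicit something the paper leaves implicit.

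For the second inequality you take a different route on the one delicate point, ties. The paper says the rank of $R(\mathbf{x}_{n+1})$ is uniform on $\{1,\dots,n+1\}$ when there are no ties, and then asserts that ties only make the event less likely. Making that rigorous needs an extra device, such as an independent uniform tie-break; under it, the strict event $R(\mathbf{x}_{n+1})>\hat{q}$ still forces the randomised rank to be at least $m+1$. You instead combine three facts:
\begin{itemize}
\item the exact identity $\{R(\mathbf{x}_{n+1})>\hat{q}\}=\{N_{n+1}\ge m\}$;
\item the deterministic bound $\sum_j\mathbf{1}[N_j\ge m]\le n+1-m$;
\item the symmetry $P(N_j\ge m)=P(N_{n+1}\ge m)$, which follows from exchangeability.
\end{itemize}
This gives a self-contained argument with no continuity assumption and no auxiliary randomisation. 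The cost is a few more lines than the textbook rank statement the paper cites.

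Two small fixes are needed. First, write out the one-line reason for the counting bound: the qualifying index with the smallest value has at least $m$ indices strictly below it, and none of those can qualify. Second, drop the sentence saying the event means ``fewer than $m$ of the $n+1$ values are at least $R_{n+1}$''. That does not describe the event; the correct statement is that at most $n+1-m$ values are at least $R_{n+1}$, counting $R_{n+1}$ itself. The sentence after it, that $R_{n+1}$ strictly exceeds at least $m$ calibration scores, is the right formulation, and it is what the rest of your argument uses.
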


\begin{proof}
If $\mathbf{x}_{n+1}$ is certified and some operator $k$ flips it, Proposition~\ref{prop:flip} gives $\hat{q}<\gamma(\mathbf{x}_{n+1})\le\sqrt{2}\|\Delta_{n+1,k}\|_2\le R(\mathbf{x}_{n+1})$. The event on the left of Eq.~(\ref{eq:conformal}) is therefore contained in $\{R(\mathbf{x}_{n+1})>\hat{q}\}$, which proves the first inequality. For the second, by exchangeability the rank of $R(\mathbf{x}_{n+1})$ among the $n+1$ values is uniformly distributed on $\{1,\dots,n+1\}$ when there are no ties, and ties only make the event $R(\mathbf{x}_{n+1})>\hat{q}$ less likely. The event $R(\mathbf{x}_{n+1})>\hat{q}$ requires $R(\mathbf{x}_{n+1})$ to be larger than at least $m$ of the calibration values, so its rank is at least $m+1$, which has probability at most $(n+1-m)/(n+1)\le\beta$ by the choice of $m$~\cite{lei2018distribution,angelopoulos2023gentle}. If $m>n$, $\hat{q}=\infty$ and nothing is certified.
\end{proof}

\rev{The certificate has three properties. First, the threshold is fixed on separate data, and at test time only the clean margin is used, so the perturbed versions of the test image are never computed. Second, the certificate needs no labels, because $R$ is computed from logits alone, and any unlabelled sample from the deployment distribution can serve as the calibration set. Third, the guarantee is marginal and depends on exchangeability. Under distribution shift the calibration quantile may be too small.} A per-sample flip bound can be built in the same way from the empirical distribution of $\sqrt{2}\|\Delta_{i,k}\|_2$ on the calibration split, and the Dvoretzky-Kiefer-Wolfowitz (DKW) inequality~\cite{massart1990tight} makes it uniform in finite samples.

\subsection{When can consistency add information?}
\label{subsec:redundancy}

\rev{The flip implication also bears on the central question of whether LS and CD can add information beyond the clean output. Proposition~\ref{prop:flip} says that a flip depends on the margin and on the size of the displacement. The clean output already contains the margin. Consistency can therefore add information only through the displacement, which the next proposition makes precise.}

\begin{proposition}[Redundancy condition]
\label{prop:redundancy}
\rev{Write $Z$, $S$ and $D$ for the correctness, LS and CD of an image drawn at random, and $\Delta_k$ for the displacement caused by operator $k$ (the image index is dropped).} LS and CD are deterministic functions of the clean logits $z(\mathbf{x})$ and the displacements $(\Delta_{1},\dots,\Delta_{K})$. If the displacements are conditionally independent of correctness given the clean logits, $(\Delta_{1},\dots,\Delta_{K})\perp Z\mid z(\mathbf{x})$, then $P(Z=1\mid z,S,D)=P(Z=1\mid z)$. In that case no error-detection score built from $(z,S,D)$ has a higher area under the ROC curve (AUROC) than the posterior $P(Z=1\mid z)$, which uses the clean logits alone.
\end{proposition}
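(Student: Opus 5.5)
The proof has three parts: show that LS and CD are functions of the clean logits and the displacements, transfer the conditional independence to $(S,D)$, and then apply a Neyman--Pearson argument to get the AUROC statement.

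\textbf{Step 1 (determinism).} By Eq.~(\ref{eq:disp}) the perturbed logits are $z(\mathbf{x}^{(k)})=z(\mathbf{x})+\Delta_k$. The perturbed prediction is therefore $\hat{y}^{(k)}=\arg\max_j\bigl(z_j+\Delta_{k,j}\bigr)$, using a fixed tie-breaking rule, which is the one implicit in the flip definition. The perturbed confidence is $c^{(k)}=\max_j\mathrm{softmax}(z+\Delta_k)_j$. The clean quantities $\hat{y}$ and $c$ are functions of $z$ alone. Substituting into Eqs.~(\ref{eq:ls}) and~(\ref{eq:cd}) gives measurable maps with $S=f(z,\Delta_{1:K})$ and $D=g(z,\Delta_{1:K})$.

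\textbf{Step 2 (conditional independence transfers).} Assume $\Delta_{1:K}\perp Z\mid z$. Then for any measurable $h$, the pair $(z,h(z,\Delta_{1:K}))$ satisfies $h(z,\Delta_{1:K})\perp Z\mid z$. This follows from the standard weak-union/functional property of conditional independence: for bounded measurable $\phi$,
\[
E[\phi(z,\Delta)\mathbf{1}[Z=1]]=E\bigl[E[\phi(z,\Delta)\mid z]\,P(Z=1\mid z)\bigr].
\]
Taking $\phi=\psi(z,f,g)$ shows that $P(Z=1\mid z)$ is a version of $P(Z=1\mid z,S,D)$. This identification is the content of the first claim.

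\textbf{Step 3 (AUROC optimality).} Any score $s(z,S,D)$ is a function of the random vector $W=(z,S,D)$. The ROC curve of a score is bounded above by that of the likelihood ratio $p(W\mid Z=1)/p(W\mid Z=0)$, which is the Neyman--Pearson lemma applied at every false-positive level, with randomised tie-breaking. That likelihood ratio is a monotone transform of the posterior $\eta(W)=P(Z=1\mid W)$. Hence no score based on $W$ has AUROC above that of $\eta(W)$. By Step~2, $\eta(W)=P(Z=1\mid z)$ almost surely, so its ROC curve coincides with that of the clean posterior. Integrating the pointwise ROC domination over the false-positive rate then gives the AUROC inequality.

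\textbf{Main obstacle.} The measure-theoretic care in Step~3 is the hardest part. When $P(Z=1\mid z)$ has atoms, ROC curves are defined through randomised thresholds, or equivalently through the tie-counting convention $P(s_1>s_0)+\tfrac12 P(s_1=s_0)$. I would state the Neyman--Pearson bound for randomised tests so that domination holds at every false-positive rate, including the interpolated segments. I would also note that the posterior is unknown in practice, so the result compares against the oracle posterior rather than MSP itself. Operator randomness only needs to be absorbed into $\Delta$, and the hypothesis already covers it.
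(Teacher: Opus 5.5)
Your proposal is correct and follows the same route as the paper's proof: $(S,D)$ is a function of $(z,\Delta_1,\dots,\Delta_K)$, so the conditional independence passes to $(S,D)$ and the two posteriors coincide. The Neyman--Pearson lemma, applied to the likelihood ratio (a monotone function of the posterior), then gives ROC domination and hence the AUROC bound. Your explicit check that $P(Z=1\mid z)$ is a version of $P(Z=1\mid z,S,D)$, your use of randomised thresholds at atoms, and your remark that the comparison is with the oracle posterior rather than with MSP are refinements the paper leaves implicit, not a different argument.
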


\begin{proof}
$(S,D)$ is a function of $(z,\Delta_1,\dots,\Delta_K)$, so conditional independence of the displacements from $Z$ given $z$ implies $(S,D)\perp Z\mid z$, and hence $P(Z=1\mid z,S,D)=P(Z=1\mid z)$. For any feature vector, the ROC curve of the posterior probability of the positive class dominates the ROC curve of every other score computed from the same features (the Neyman-Pearson lemma applied to the likelihood ratio, which is a monotone function of the posterior). Since the posterior given $(z,S,D)$ equals the posterior given $z$, the best achievable ROC curve is the same with or without $(S,D)$.
\end{proof}

\rev{Proposition~\ref{prop:redundancy} makes the question of added information testable. Consistency can only help if two images with the same clean logits, one classified correctly and one not, tend to move differently under the operators. The cleanest test therefore fixes the margin and asks whether the displacement still separates correct from incorrect predictions. The condition is expected to fail when the clean output itself is corrupted, for example by input noise or by an adversarial perturbation designed to raise confidence. In such cases the displacement caused by a smoothing operator can reveal what the clean logits hide~\cite{xu2018feature}.}

\subsection{Evaluation pipeline}
\label{subsec:pipeline}

Fig.~\ref{fig:framework} \rev{shows the pipeline and where the three theoretical results enter.} Computing LS, CD and CS for $N$ images costs $N(K+1)$ forward passes. The certificate costs one forward pass per test image once $\hat{q}$ has been computed on the calibration split.

\begin{figure*}[h!]
\centering
\begin{tikzpicture}[
    >=stealth, font=\sffamily\small,
    box/.style={draw=black!70, rounded corners, align=center, fill=gray!5, thick, minimum height=1.0cm},
    perturb/.style={draw=purple!70!black, fill=purple!5, rounded corners, minimum width=2.0cm, minimum height=1.0cm, align=center, thick},
    infer/.style={draw=green!50!black, fill=green!5, rounded corners, minimum width=10.6cm, minimum height=1.0cm, align=center, thick},
    metric/.style={draw=blue!60!black, fill=blue!5, rounded corners, minimum width=3.2cm, minimum height=1.25cm, align=center, thick},
    theory/.style={draw=orange!70!black, fill=orange!8, rounded corners, align=left, thick, text width=4.2cm},
    final/.style={draw=red!60!black, fill=red!5, rounded corners, minimum height=0.85cm, align=center, thick},
    arrow/.style={->, draw=gray!70, thick}]
  \node[box] (input) at (-0.9, 3.3) {\textbf{Test image} $\mathbf{x}_i$};
  \node[perturb] (p1) at (-4.2, 1.6) {$T_1(\mathbf{x}_i)$};
  \node[perturb] (p2) at (-1.8, 1.6) {$T_2(\mathbf{x}_i)$};
  \node (dots) at (0.3, 1.6) {\Large $\cdots$};
  \node[perturb] (pk) at (2.4, 1.6) {$T_K(\mathbf{x}_i)$};
  \node[infer] (infer) at (-0.9, 0.0) {\textbf{Classifier} $z(\cdot)$: clean logits $z(\mathbf{x}_i)$, perturbed logits $z(\mathbf{x}_i^{(k)})$, displacements $\Delta_{i,k}$};
  \node[metric] (ls) at (-4.6, -1.65) {\textbf{LS}, Eq.~(\ref{eq:ls})\\ $S_i$};
  \node[metric] (cd) at (-0.9, -1.65) {\textbf{CD}, Eq.~(\ref{eq:cd})\\ $D_i$};
  \node[metric] (cs) at (2.8, -1.65) {\textbf{CS}, Eq.~(\ref{eq:cs})\\ $C_i=\alpha S_i+(1-\alpha)(1-D_i)$};
  \node[final] (cag) at (-0.9, -3.2) {Report $\bar{S}$, $\bar{D}$, $\bar{C}$ and $\mathrm{CAG}=\mathrm{Acc}-\bar{C}$ (descriptive; Section~\ref{subsec:metrics})};
  \node[theory] (prop1) at (6.9, 1.9) {\textbf{Proposition~\ref{prop:flip}} (exact)\\ flip under $T_k$ $\Rightarrow$\\ $\gamma(\mathbf{x}_i)\le\sqrt{2}\,\|\Delta_{i,k}\|_2$};
  \node[theory] (prop2) at (6.9, -0.35) {\textbf{Proposition~\ref{prop:conformal}} (finite sample)\\ unlabelled calibration set $\Rightarrow\hat{q}$;\\ certify if $\gamma(\mathbf{x}_i)>\hat{q}$:\\ $P(\text{certified}\wedge\text{flip})\le\beta$};
  \node[theory] (prop3) at (6.9, -2.65) {\textbf{Proposition~\ref{prop:redundancy}}\\ LS, CD add information only if $\Delta$ depends on correctness at fixed $z(\mathbf{x}_i)$};
  \draw[arrow] (input) -- (p1); \draw[arrow] (input.south) -- (p2.north); \draw[arrow] (input.south) -- (pk.north);
  \draw[arrow] (p1) -- (p1 |- infer.north); \draw[arrow] (p2) -- (p2 |- infer.north); \draw[arrow] (pk) -- (pk |- infer.north);
  \draw[arrow] (ls |- infer.south) -- (ls); \draw[arrow] (infer.south) -- (cd); \draw[arrow] (cs |- infer.south) -- (cs);
  \draw[arrow] (ls) -- (ls |- cag.north); \draw[arrow] (cd) -- (cag); \draw[arrow] (cs) -- (cs |- cag.north);
  \draw[arrow, dashed, draw=orange!70!black] (infer.east) -- (prop1.west);
  \draw[arrow, dashed, draw=orange!70!black] (prop1) -- (prop2);
  \draw[arrow, dashed, draw=orange!70!black] (prop2) -- (prop3);
\end{tikzpicture}
\caption{Evaluation pipeline and where the theory enters. The left column computes the DCF quantities from $K+1$ forward passes per image. The right column shows the three results of Section~\ref{sec:framework}: the exact flip implication (Proposition~\ref{prop:flip}), the conformal certificate that needs only the clean margin of a test image and an unlabelled calibration set (Proposition~\ref{prop:conformal}), and the condition under which LS and CD can carry information beyond the clean logits (Proposition~\ref{prop:redundancy}).}
\label{fig:framework}
\end{figure*}

\section{Experimental protocol}
\label{sec:protocol}

\subsection{Datasets}
\label{subsec:datasets}

\rev{The experiments use four image classification datasets (Table~\ref{tab:datasets}). They are CIFAR-10~\cite{krizhevsky2009learning}, STL-10~\cite{coates2011stl10}, the Intel Image Classification scene dataset~\cite{intelimage} and OCTMNIST~\cite{yang2021medmnist,kermany2018oct}. The first three contain natural images, and OCTMNIST contains retinal optical coherence tomography images. For each run we draw an evaluation set of $N=1{,}000$ images and a disjoint calibration set of $1{,}000$ images. On CIFAR-10, STL-10 and Intel-Scene, both sets come from the official test split. OCTMNIST has only 1,000 test images. The whole test split is therefore the evaluation set, and the calibration set is drawn from the official validation split. The OCTMNIST training and validation splits share one class prior, while the test split is balanced. Accuracy and expected calibration error (ECE, 15 equal-width bins) are computed on the full test split. DCF quantities are computed on the evaluation set.}

\begin{table}[h!]
\centering
\caption{Datasets. Images are resized to $224\times224$ for the ImageNet backbones after the perturbation is applied at native resolution. OCTMNIST class priors are given as (choroidal neovascularization, diabetic macular edema, drusen, normal).}
\label{tab:datasets}
\small
\begin{tabular}{@{}lccl@{}}
\toprule
Dataset & Resolution & Classes & Split used for evaluation / calibration \\
\midrule
CIFAR-10 & $32\times32$ & 10 & test / test (disjoint) \\
STL-10 & $96\times96$ & 10 & test / test (disjoint) \\
Intel-Scene & $150\times150$ & 6 & test / test (disjoint) \\
OCTMNIST & $28\times28$ & 4 & test (balanced) / validation, prior (0.344, 0.105, 0.080, 0.472) \\
\bottomrule
\end{tabular}
\end{table}

\subsection{Models and adaptation}
\label{subsec:models}

\rev{We evaluate five groups of models. Together they cover frozen and fully trained classifiers, convolutional and transformer architectures, and single models and ensembles.}

\rev{The first group contains frozen CNNs: ResNet-18, ResNet-50~\cite{he2016deep}, VGG-16~\cite{simonyan2015very} and MobileNetV2~\cite{sandler2018mobilenetv2}. Each uses frozen ImageNet weights and one linear head trained on the target data (SGD, learning rate $10^{-3}$, momentum 0.9, batch size 32, 5 epochs). The same head produces accuracy, ECE and all DCF quantities. This gives 16 configurations, CNN-001 to CNN-016.}

\rev{The second group contains frozen vision transformers: ViT-B/16~\cite{dosovitskiy2020image}, DeiT-B~\cite{touvron2021deit} and Swin-B~\cite{liu2021swin}. Each is trained with a linear head by exactly the same recipe on each target dataset, which gives 12 configurations. The transformers are always evaluated with a trained head, because their ImageNet output space does not match the target classes.}

\rev{The third group contains fourteen publicly released CIFAR-10 networks that were trained end to end at $32\times32$. They are ResNet-20/32/44/56, VGG-11/13/16/19 with batch normalisation, MobileNetV2 $\times0.5$ and $\times1.4$, ShuffleNetV2 $\times0.5$ and $\times2.0$, and RepVGG-A0 and A2}\footnote{Weights from the \texttt{chenyaofo/pytorch-cifar-models} repository.}\rev{. These networks contrast with the frozen backbones, which use only a short linear probe. For them the seed changes the evaluation images and the operator noise, not the training.}

\rev{The fourth group is a 13-model zoo for model-level questions. It contains the four frozen CNNs and three vision transformers, plus ResNet-34, DenseNet-121, EfficientNet-B0, ConvNeXt-T~\cite{liu2022convnext}, RegNetY-800MF and ShuffleNetV2. All use linear heads trained on CIFAR-10 by the same recipe.}

\rev{The fifth group consists of adaptation controls on OCTMNIST and Intel-Scene. Linear probing and full fine-tuning can produce different features and failure modes~\cite{kumar2022finetuning,kirichenko2023lastlayer}. The controls therefore vary the training regime. They comprise three conditions. The first is the frozen head of each of the four CNNs, trained for up to 30 epochs with early stopping on a held-out 10\% of the training data (patience 3). The second is full fine-tuning of ResNet-18 and MobileNetV2 (AdamW, backbone learning rate $10^{-4}$, head $10^{-3}$, weight decay $10^{-4}$, batch 64, at most 15 epochs with early stopping). The third is ResNet-18 trained from random initialisation on the target data only, which serves as the in-domain control. Because of a 6\,GB GPU limit, full fine-tuning on OCTMNIST used a stratified subset of 20,000 training images and 15 rather than 30 epochs. For a matched-accuracy comparison, we took from each fine-tuning or from-scratch trajectory the epoch whose test accuracy was closest to that of the paired frozen model.}

\rev{Every configuration was run with seeds 42, 43 and 44. The seed controls head training where there is training, the choice of evaluation and calibration images, and the draw of the noise operator.}

\subsection{Perturbation operators and audit}
\label{subsec:audit}

\rev{DCF perturbs each image with six operators. Rotation is by $2^\circ$ (nearest-neighbour, no expansion, uncovered corners set to zero). Brightness and contrast use factors of 1.1 (clipped to $[0,1]$). Gaussian noise has $\sigma=0.01$ on pixels in $[0,1]$ (clipped). Gaussian blur uses a $3\times3$ kernel with $\sigma=0.5$ (reflect padding). JPEG re-encoding uses quality 75. All operators are applied at native resolution, before resizing.}

\rev{An audit characterises how strongly each operator acts. Table~\ref{tab:operators} gives the mean $\ell_2$ distance between clean and perturbed images at the network input, together with the flip rate of the frozen CNNs. The distances differ between operators on the same dataset. On OCTMNIST the $2^\circ$ rotation leaves every image unchanged in the pipeline, so it acts as a null operator there, and only five of the six operators can change a prediction. The spread of the distance across images is small relative to the differences between operators. Excluding the null operator on OCTMNIST, the 95th percentile is at most 1.65 times the median for every operator and dataset. The largest distances occur for rotation on Intel-Scene (median 50.0, 95th percentile 73.3, maximum 89.2). Gaussian noise is almost constant (CIFAR-10: median 2.65, maximum 2.76).}

\rev{A screening check assessed label preservation. A vision-language model judged 420 images (360 perturbed, 20 per operator and dataset for CIFAR-10, STL-10 and Intel-Scene, plus 60 clean controls) without seeing the original label. It assigned the original class to all 420.}

\begin{table}[t]
\centering
\caption{Operator audit. Each cell gives the mean $\ell_2$ distance between clean and perturbed images at the $224\times224$ network input (pixel values in $[0,1]$), followed by the flip rate in percent for frozen CNNs (mean over four models and three seeds).}
\label{tab:operators}
\small
\begin{tabular}{@{}lcccc@{}}
\toprule
Operator & CIFAR-10 & STL-10 & Intel-Scene & OCTMNIST \\
\midrule
Rotation $2^\circ$ & 15.2 / 9.8 & 39.0 / 3.6 & 50.9 / 5.2 & 0.0 / 0.0 \\
Brightness $\times1.1$ & 19.1 / 2.8 & 18.0 / 0.9 & 18.1 / 1.2 & 8.3 / 3.7 \\
Gaussian noise $\sigma=0.01$ & 2.6 / 6.1 & 2.6 / 1.8 & 2.6 / 2.0 & 2.6 / 21.9 \\
Contrast $\times1.1$ & 7.6 / 2.5 & 7.7 / 0.7 & 8.0 / 1.1 & 5.4 / 3.6 \\
Blur $3\times3$, $\sigma=0.5$ & 5.5 / 9.1 & 5.3 / 2.2 & 7.6 / 2.5 & 6.1 / 14.9 \\
JPEG $q=75$ & 9.3 / 21.7 & 8.5 / 5.1 & 7.5 / 3.2 & 0.6 / 2.3 \\
\bottomrule
\end{tabular}
\end{table}

\subsection{Baselines and evaluation measures}
\label{subsec:baselines}

\rev{The consistency scores are compared with single-pass confidence scores and with other perturbation-based scores. For misclassification detection we score every evaluation image with MSP, MSP after temperature scaling, the logit margin of Eq.~(\ref{eq:margin}), negative entropy, maximum logit~\cite{hendrycks2022scaling}, the energy score~\cite{liu2020energy}, DOCTOR~\cite{granese2021doctor}, LS, CS and the CD variants of Section~\ref{subsec:metrics}. The natively trained models receive three further scores. They are the Mahalanobis score on penultimate features~\cite{lee2018simple}, two feature-space consistency scores (the KL divergence between clean and perturbed predictive distributions, and the displacement of the penultimate embedding), and TTA with horizontal flips and two-pixel shifts~\cite{ayhan2018tta}. The TTA operator set is deliberately different from the DCF operators. The zoo adds TTA agreement and TTA mean confidence. The frozen CNNs on CIFAR-10, Intel-Scene and OCTMNIST add a five-head last-layer ensemble~\cite{lakshminarayanan2017simple} with ensemble MSP, entropy, mutual information and member agreement.}

\rev{The main measure is the AUROC for separating correct from incorrect predictions. Higher scores mean ``more likely correct'', so that 0.5 is chance. We also report the area under the risk-coverage curve (AURC; lower is better)~\cite{geifman2017selective}. Under shift we report AUROC on the shifted data, the error of label-free accuracy estimators (average thresholded confidence, ATC~\cite{garg2022atc}, and difference of confidences, DoC~\cite{guillory2021doc}) and, at the model level, rank correlations with accuracy under shift. Cost is measured in forward passes and in wall-clock time with mixed precision on one RTX~4050 laptop GPU.}

\subsection{Shift, adversarial and out-of-distribution tests}
\label{subsec:shiftprotocol}

\rev{The scores are also tested where confidence is expected to be less reliable. We use three kinds of shift. The first is CIFAR-10-C-style corruptions generated in this study (six corruptions, five severities) on the CIFAR-10 frozen CNNs. The second is the official CIFAR-10-C files~\cite{hendrycks2019benchmarking} (19 corruptions, five severities) on the 14 natively trained models. The third is an operator-strength sweep in which every DCF operator is applied at five strengths, level 1 being the setting of Table~\ref{tab:operators}, with the level chosen on the calibration split.}

\rev{For adversarial inputs we ran $\ell_\infty$ PGD~\cite{madry2017towards} with 10 steps, step size $\epsilon_\infty/4$ and a random start, at budgets $\epsilon_\infty\in\{2,4,8\}/255$, on 500 test images per run for the 14 natively trained models. This attack has no knowledge of DCF. An adaptive attack evaluates detection against an attacker who knows the detector. It was run on four of the models (ResNet-20, ResNet-56, VGG-16-BN, RepVGG-A0) with 20 steps of size $\epsilon_\infty/8$ at $\epsilon_\infty\in\{4,8\}/255$. The attack maximises}
\begin{equation}
\mathcal{L}(\mathbf{x}') = \mathrm{CE}\bigl(z(\mathbf{x}'),y\bigr) - \lambda\,\frac{1}{K}\sum_{k=1}^{K}\Bigl[\mathrm{CE}\bigl(z(T_k\mathbf{x}'),\hat{y}(\mathbf{x}')\bigr) + \bigl|c(T_k\mathbf{x}')-c(\mathbf{x}')\bigr|\Bigr],
\label{eq:adaptive}
\end{equation}
\rev{where $\mathrm{CE}$ is the cross-entropy and the weight $\lambda\in\{0,1,5\}$ controls the stability term. The first term pushes the prediction away from the true label, and the second makes the new prediction stable under the DCF operators. Rotation, brightness, contrast and blur are differentiable. JPEG is passed through BPDA with an identity gradient~\cite{athalye2018obfuscated}. The attacker uses its own noise draw, which the defender does not share. For OOD detection we used 1,000 images each from SVHN~\cite{netzer2011svhn} (far OOD) and CIFAR-100 (near OOD).}

\subsection{Statistics, pre-specification and reproducibility}
\label{subsec:stats}

\rev{Every number is a mean over seeds, and models trained once are evaluated on three seeds of sampling. For comparisons between scores we use a hierarchical bootstrap that resamples configurations, then seeds within configurations, then images within runs~\cite{saravanan2020hierarchical} (500 replicates). We report the paired difference with its 95\% interval, Cohen's $d_z$ over configurations, and Wilcoxon signed-rank tests with Holm correction~\cite{holm1979simple} within each family of comparisons. The unit of analysis is the configuration, not the image.}

\rev{Five predictions were specified before the adversarial and OOD experiments were run. They were: (i) consistency scores would detect adversarial inputs and errors under attack better than MSP and margin; (ii) they would not beat energy or Mahalanobis on far OOD; (iii) they would not beat MSP on near OOD; (iv) no consistency-type score would beat MSP for clean error detection; and (v) Eq.~(\ref{eq:cs}) would leave under-trained and stochastic-dropout heads inside or near the normal range, while at least one CD variant would expose them. The grids for $\alpha$ (0 to 1 in steps of 0.25), $\delta$ (0.02, 0.05, 0.10) and $K$ (1 to 6) were fixed before the test results were examined.}

\section{Results}
\label{sec:results}

\subsection{Reference measurements}
\label{subsec:reference}

\rev{Table~\ref{tab:reference} lists the measured DCF quantities for the 28 frozen configurations. Two observations follow. First, the numbers are stable across seeds: the standard deviation of mean CS is at most 0.005. Second, every CAG is negative. At $\delta=0.05$, 11 of the 16 CNN configurations are consistency-dominant and 5 are balanced, while 8 of the 12 transformer configurations are balanced. The transformers are more consistent than the CNNs on CIFAR-10 (CS 0.945 to 0.961 against 0.911 to 0.936). CS values are therefore not comparable across architectures without qualification.}

\begin{table*}[t]
\centering
\caption{Reference values of the DCF quantities (mean over three seeds; $\pm$ is the standard deviation of mean CS across seeds). Acc and ECE are computed on the full test split; LS, CD and CS on $N=1{,}000$ evaluation images. Regime labels use $\delta=0.05$ and are descriptive only.}
\label{tab:reference}
\small
\begin{tabular}{@{}llccccccl@{}}
\toprule
Dataset & Model & Acc & ECE & LS & CD & CS & CAG & Regime \\
\midrule
\multicolumn{9}{@{}l}{\emph{Convolutional networks, frozen backbone and linear head}}\\
CIFAR-10 & ResNet-18 & 0.864 & 0.009 & 0.912 & 0.067 & 0.922\,$\pm$\,0.005 & $-0.058$ & consistency-dom. \\
CIFAR-10 & ResNet-50 & 0.883 & 0.009 & 0.929 & 0.057 & 0.936\,$\pm$\,0.002 & $-0.053$ & consistency-dom. \\
CIFAR-10 & VGG-16 & 0.841 & 0.008 & 0.896 & 0.075 & 0.911\,$\pm$\,0.001 & $-0.070$ & consistency-dom. \\
CIFAR-10 & MobileNetV2 & 0.865 & 0.017 & 0.913 & 0.064 & 0.924\,$\pm$\,0.003 & $-0.059$ & consistency-dom. \\
STL-10 & ResNet-18 & 0.948 & 0.048 & 0.974 & 0.028 & 0.973\,$\pm$\,0.002 & $-0.025$ & balanced \\
STL-10 & ResNet-50 & 0.966 & 0.042 & 0.981 & 0.023 & 0.979\,$\pm$\,0.001 & $-0.013$ & balanced \\
STL-10 & VGG-16 & 0.951 & 0.022 & 0.975 & 0.027 & 0.974\,$\pm$\,0.003 & $-0.023$ & balanced \\
STL-10 & MobileNetV2 & 0.945 & 0.059 & 0.974 & 0.030 & 0.972\,$\pm$\,0.000 & $-0.027$ & balanced \\
Intel-Scene & ResNet-18 & 0.911 & 0.014 & 0.978 & 0.024 & 0.977\,$\pm$\,0.002 & $-0.066$ & consistency-dom. \\
Intel-Scene & ResNet-50 & 0.919 & 0.016 & 0.977 & 0.025 & 0.976\,$\pm$\,0.003 & $-0.057$ & consistency-dom. \\
Intel-Scene & VGG-16 & 0.922 & 0.014 & 0.973 & 0.032 & 0.970\,$\pm$\,0.001 & $-0.048$ & balanced \\
Intel-Scene & MobileNetV2 & 0.917 & 0.019 & 0.971 & 0.026 & 0.972\,$\pm$\,0.001 & $-0.055$ & consistency-dom. \\
OCTMNIST & ResNet-18 & 0.629 & 0.115 & 0.913 & 0.053 & 0.930\,$\pm$\,0.001 & $-0.301$ & consistency-dom. \\
OCTMNIST & ResNet-50 & 0.623 & 0.135 & 0.932 & 0.042 & 0.945\,$\pm$\,0.002 & $-0.322$ & consistency-dom. \\
OCTMNIST & VGG-16 & 0.627 & 0.119 & 0.928 & 0.049 & 0.939\,$\pm$\,0.002 & $-0.312$ & consistency-dom. \\
OCTMNIST & MobileNetV2 & 0.651 & 0.135 & 0.918 & 0.054 & 0.932\,$\pm$\,0.002 & $-0.281$ & consistency-dom. \\
\midrule
\multicolumn{9}{@{}l}{\emph{Vision transformers, frozen backbone and linear head (same recipe)}}\\
CIFAR-10 & ViT-B/16 & 0.951 & 0.011 & 0.959 & 0.036 & 0.961\,$\pm$\,0.003 & $-0.011$ & balanced \\
CIFAR-10 & DeiT-B & 0.943 & 0.013 & 0.957 & 0.042 & 0.958\,$\pm$\,0.001 & $-0.015$ & balanced \\
CIFAR-10 & Swin-B & 0.926 & 0.018 & 0.943 & 0.052 & 0.945\,$\pm$\,0.005 & $-0.019$ & balanced \\
STL-10 & ViT-B/16 & 0.980 & 0.050 & 0.993 & 0.016 & 0.989\,$\pm$\,0.001 & $-0.008$ & balanced \\
STL-10 & DeiT-B & 0.982 & 0.052 & 0.990 & 0.020 & 0.985\,$\pm$\,0.001 & $-0.004$ & balanced \\
STL-10 & Swin-B & 0.980 & 0.081 & 0.991 & 0.026 & 0.982\,$\pm$\,0.001 & $-0.002$ & balanced \\
Intel-Scene & ViT-B/16 & 0.932 & 0.027 & 0.984 & 0.017 & 0.984\,$\pm$\,0.002 & $-0.052$ & consistency-dom. \\
Intel-Scene & DeiT-B & 0.932 & 0.022 & 0.981 & 0.020 & 0.980\,$\pm$\,0.001 & $-0.048$ & balanced \\
Intel-Scene & Swin-B & 0.929 & 0.036 & 0.971 & 0.028 & 0.971\,$\pm$\,0.002 & $-0.042$ & balanced \\
OCTMNIST & ViT-B/16 & 0.656 & 0.118 & 0.935 & 0.039 & 0.948\,$\pm$\,0.000 & $-0.292$ & consistency-dom. \\
OCTMNIST & DeiT-B & 0.611 & 0.161 & 0.935 & 0.044 & 0.946\,$\pm$\,0.001 & $-0.335$ & consistency-dom. \\
OCTMNIST & Swin-B & 0.571 & 0.167 & 0.924 & 0.053 & 0.936\,$\pm$\,0.000 & $-0.364$ & consistency-dom. \\
\bottomrule
\end{tabular}
\end{table*}

\rev{Mean CS is not independent of accuracy. Across the 16 CNN configurations the Pearson correlation between Acc and CS is $r=0.586$ (95\% CI 0.36 to 0.87, $p=0.017$; Spearman 0.71), and CS is not related to ECE ($r=-0.21$, $p=0.43$). Brier scores averaged 0.077 on STL-10, 0.123 on Intel-Scene, 0.198 on CIFAR-10 and 0.485 on OCTMNIST. Like ECE, they follow accuracy. Flip rates depend on the dataset more than on the operator (Table~\ref{tab:operators}). JPEG dominates on CIFAR-10, while noise (21.9\%) and blur (14.9\%) dominate on OCTMNIST. The input distance is a poor guide to the effect on the model. On Intel-Scene the mean distance differs by a factor of about 20 between operators (rotation 50.9, noise 2.6). JPEG moves CIFAR-10 images less than brightness does (9.3 against 19.1) but flips 21.7\% of predictions rather than 2.8\%.}

\subsection{Error detection against confidence baselines}
\label{subsec:e3}

\rev{The central comparison is with single-pass confidence scores. Table~\ref{tab:e3} gives error-detection AUROC and AURC for the frozen CNNs. MSP, its temperature-scaled version and the margin are the best scores on every dataset except OCTMNIST, where negative entropy is marginally ahead. CS is below MSP on all four datasets and in all 16 configurations. LS alone is much weaker (0.700). The hierarchical bootstrap puts the difference CS $-$ MSP at $-0.057$ (95\% CI $-0.073$ to $-0.043$; Holm-corrected $p=1.2\times10^{-4}$; $d_z=-1.90$). The difference CS $-$ margin, computed on unrounded values, is $-0.056$ ($-0.070$ to $-0.041$), and CS $-$ negative entropy is $-0.051$ ($-0.069$ to $-0.036$). CS does beat the energy score on average ($+0.034$), but the interval includes zero ($-0.009$ to $+0.082$; $p=0.18$). Energy is itself a weak misclassification detector here because the linear heads leave the logit scale uncalibrated.}

\begin{table*}[t]
\centering
\caption{Misclassification detection for the 16 frozen CNN configurations (48 runs). Each cell gives AUROC / AURC, averaged over models and seeds. Higher AUROC and lower AURC are better.}
\label{tab:e3}
\small
\begin{tabular}{@{}lccccc@{}}
\toprule
Score & CIFAR-10 & STL-10 & Intel-Scene & OCTMNIST & All \\
\midrule
MSP & \textbf{0.883} / 0.029 & 0.950 / 0.004 & \textbf{0.892} / 0.014 & 0.778 / 0.174 & \textbf{0.876} / 0.055 \\
MSP, temperature-scaled & \textbf{0.883} / 0.029 & \textbf{0.951} / 0.004 & \textbf{0.892} / 0.014 & 0.778 / 0.174 & \textbf{0.876} / 0.055 \\
Logit margin & \textbf{0.883} / 0.029 & 0.950 / 0.004 & 0.891 / 0.014 & 0.773 / 0.175 & 0.874 / 0.056 \\
Negative entropy & 0.873 / 0.031 & 0.940 / 0.004 & 0.884 / 0.015 & \textbf{0.782} / 0.173 & 0.870 / 0.056 \\
Max logit & 0.813 / 0.044 & 0.910 / 0.006 & 0.722 / 0.040 & 0.773 / 0.182 & 0.804 / 0.068 \\
Energy & 0.794 / 0.048 & 0.890 / 0.008 & 0.686 / 0.045 & 0.767 / 0.185 & 0.784 / 0.071 \\
CS, Eq.~(\ref{eq:cs}) & 0.802 / 0.045 & 0.918 / 0.006 & 0.861 / 0.018 & 0.693 / 0.215 & 0.819 / 0.071 \\
CS, rescaled CD & 0.820 / 0.041 & 0.928 / 0.005 & 0.870 / 0.017 & 0.713 / 0.204 & 0.833 / 0.067 \\
LS, Eq.~(\ref{eq:ls}) & 0.737 / 0.076 & 0.766 / 0.021 & 0.683 / 0.059 & 0.616 / 0.304 & 0.700 / 0.115 \\
\bottomrule
\end{tabular}
\end{table*}

\rev{The same ordering holds beyond the frozen-backbone setting (Fig.~\ref{fig:e3}, Table~\ref{tab:families}). With the transformers, CS reached 0.855 against 0.891 for MSP and 0.895 for the margin. It was ahead of MSP only in 6 of 36 runs, all on STL-10, where it still trailed the margin. With the natively trained CIFAR-10 models the gap narrowed to 0.016 but did not change sign: CS was better than MSP in 3 of 42 runs. The deficit is therefore smaller for natively trained models than for linear probes, but it has the same sign. In the 13-model zoo, CS (0.778) was again below MSP (0.865). TTA mean confidence matched MSP exactly (0.865), while TTA label agreement (0.742) was below CS. The five-head ensembles were barely better than a single head (ensemble MSP 0.854 against 0.851), but CS lost to ensemble MSP in all 36 runs.}

\rev{Among the consistency-type scores computed on the natively trained models, TTA mean confidence was the only one to beat MSP (0.939 against 0.920). It averages predictions over views instead of measuring their disagreement. This is consistent with Proposition~\ref{prop:redundancy}: averaging can improve the estimate of the clean posterior, while agreement only adds a second, noisier view of the same quantity.}

\begin{figure*}[t]
\centering
\includegraphics[width=0.92\textwidth]{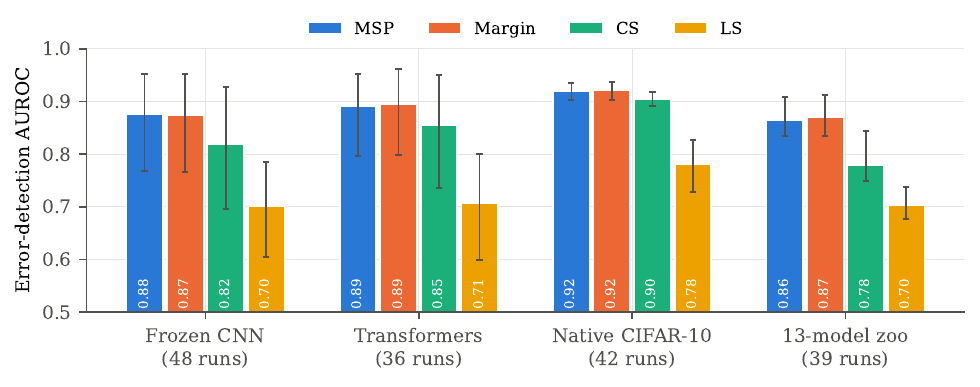}
\caption{Misclassification-detection AUROC on clean data for four model families. Bars are means over runs; whiskers span the 10th to 90th percentiles across runs. CS (Eq.~(\ref{eq:cs})) is below MSP and the margin in every family.}
\label{fig:e3}
\end{figure*}

\begin{table}[t]
\centering
\caption{Clean misclassification-detection AUROC beyond the frozen CNNs (means over runs). ``CS $>$ MSP'' counts the runs in which CS had the higher AUROC.}
\label{tab:families}
\small
\setlength{\tabcolsep}{4pt}
\begin{tabular}{@{}lcccccc@{}}
\toprule
Family (runs) & MSP & Margin & Best other & CS & LS & CS $>$ MSP \\
\midrule
Transformers (36) & 0.891 & 0.895 & entropy 0.878 & 0.855 & 0.707 & 6/36 (all STL-10) \\
Native CIFAR-10 (42) & 0.920 & 0.921 & TTA mean conf.\ 0.939 & 0.904 & 0.781 & 3/42 \\
13-model zoo (39) & 0.865 & 0.870 & TTA mean conf.\ 0.865 & 0.778 & 0.704 & 0/39 \\
Last-layer ensembles (36) & 0.851 & n/a & ensemble MSP 0.854 & 0.786 & 0.678 & 0/36 vs ens.\ MSP \\
\bottomrule
\end{tabular}
\end{table}

\rev{Three further analyses examine whether LS or CD contribute beyond confidence. Among predictions with MSP $\ge 0.90$ (about 600 images per run with 15 errors), LS was at chance (AUROC 0.508), CS reached 0.692 and the margin 0.773. A cross-validated logistic model on MSP and margin changed by $+0.001$ when LS was added and by $-0.001$ when LS and CD were both added. The Spearman correlation between margin and LS was 0.40 to 0.68 across configurations (mean 0.55), but the partial correlation given MSP fell to $-0.03$ to 0.16 (mean 0.07). Stability is mostly a re-expression of confidence.}

\rev{The cost of DCF is also higher than that of the baselines. DCF needs seven forward passes per image, and the measured slowdown with mixed precision was $7.9\times$ (VGG-16) to $15.2\times$ (MobileNetV2). The ranking of configurations by mean CS also depends on the operator set. Kendall's $\tau_{\mathrm{K}}$ between the ranking from all six operators and the ranking from a subset was 0.44 on average for one operator, 0.68 for three and 0.88 for five. Across the five severities of the self-generated CIFAR-10 corruptions (Section~\ref{subsec:shift}), the ranking of the four frozen CIFAR-10 CNNs by mean CS did not change ($\tau_{\mathrm{K}}=1$ for every pair of severities). With only four models this is a weak test, and it concerns corruption severity, not operator strength.}

\subsection{Reformulating CD and choosing $\alpha$}
\label{subsec:cdvariants}

\rev{Eq.~(\ref{eq:cd}) can reward a model that switches classes at constant confidence. Three alternative definitions are evaluated. The first is the change in the probability of the clean class, $|p_{\hat{y}_i}(\mathbf{x}_i)-p_{\hat{y}_i}(\mathbf{x}_i^{(k)})|$. The second is the total-variation distance between the clean and perturbed probability vectors. The third is Eq.~(\ref{eq:cd}) rescaled to $[0,1]$ on the calibration split. The first two respond to a change of class. The rescaled version keeps the maximum-probability form and therefore keeps this property. Each is evaluated with $\alpha$ selected on the calibration split (Table~\ref{tab:cdvariants}, upper part). The total-variation and rescaled versions improve CS slightly (0.828 and 0.833 against 0.819), the clean-class version does not, and none comes within 0.04 of MSP. Choosing $\alpha$ on the calibration split gave the same test AUROC as $\alpha=0.5$ to three decimals. Even the best $\alpha$ chosen on the test data improved AUROC by less than 0.001. The value $\alpha=0.5$ is therefore used throughout, and CS is treated as a ranking score and not as an improvement over its components. Logit displacement is not used as a CD variant, because it is unbounded and depends on the logit scale. Rescaling changes the range of CD, not its calibration, and no temperature-scaled CD is evaluated.}

\rev{The lower part of Table~\ref{tab:cdvariants} reports heads retrained on ResNet-20 features. These heads test whether the variants expose abnormal models. With dropout rate 0.9 kept active at test time, LS is 0.249, and Eq.~(\ref{eq:cd}) lifts CS to 0.477. The total-variation variant gives 0.346. All variants separate stochastic-dropout heads from normal ones, since the 14 natively trained models have mean CS between 0.956 and 0.977. The opposite case is harder. A head trained for only 50 SGD steps has accuracy 0.793 but a mean confidence of 0.255, close to uniform. Its confidence barely moves under perturbation, so CD is near zero and CS stays at 0.937 to 0.945 under every variant. This is within 0.02 of the fully trained head (0.942 to 0.955) and inside the range of the 28 reference configurations in Table~\ref{tab:reference}. The model is under-confident, not unstable. No consistency score detects it, and only the gap between accuracy and confidence does. The same occurred in the zoo, where an under-trained ShuffleNetV2 head with ECE 0.59 had CS 0.949. Prediction (v) was therefore partly confirmed. The variants expose stochastic dropout, but nothing in DCF exposes under-confidence.}

\begin{table}[t]
\centering
\caption{Upper part: error-detection AUROC of CS with four definitions of CD, with $\alpha=0.5$ and with $\alpha$ chosen on the calibration split (48 frozen-CNN runs; MSP reaches 0.876). Lower part: ResNet-20 features with retrained heads on CIFAR-10 (mean of three seeds).}
\label{tab:cdvariants}
\small
\setlength{\tabcolsep}{4pt}
\begin{tabular}{@{}lccccc@{}}
\toprule
CD definition & \multicolumn{2}{c}{$\alpha=0.5$} & \multicolumn{2}{c}{$\alpha$ from calibration} & $\alpha=0.5$ chosen \\
\midrule
Eq.~(\ref{eq:cd}), max-probability & \multicolumn{2}{c}{0.819} & \multicolumn{2}{c}{0.819} & 58\% of runs \\
Rescaled to $[0,1]$ on calibration & \multicolumn{2}{c}{0.833} & \multicolumn{2}{c}{0.833} & 21\% \\
Clean-class probability & \multicolumn{2}{c}{0.817} & \multicolumn{2}{c}{0.817} & 44\% \\
Total variation & \multicolumn{2}{c}{0.828} & \multicolumn{2}{c}{0.828} & 46\% \\
\midrule
Head & Acc & Mean conf. & LS & CS, Eq.~(\ref{eq:cs}) & CS, total variation \\
\midrule
Fully trained & 0.921 & 0.963 & 0.943 & 0.955 & 0.942 \\
Stochastic dropout rate 0.5 & 0.867 & 0.892 & 0.819 & 0.843 & 0.801 \\
Stochastic dropout rate 0.9 & 0.429 & 0.508 & 0.249 & 0.477 & 0.346 \\
Under-trained, 50 SGD steps & 0.793 & 0.255 & 0.909 & 0.945 & 0.937 \\
\bottomrule
\end{tabular}
\end{table}

\subsection{Why consistency is redundant on clean data}
\label{subsec:mechanism}

\rev{The clean-data results raise the question of why consistency adds so little. Proposition~\ref{prop:redundancy} predicts that LS can only beat the margin if the logit displacement separates correct from incorrect images at a fixed margin. We therefore binned the evaluation images of each of the 84 frozen runs (48 CNN, 36 transformer) into margin deciles. Within each bin with at least five correct and five incorrect images, we computed the AUROC of the negative mean displacement. The size-weighted average was 0.515. It exceeded 0.5 in 37 of 84 runs, and the run-level Wilcoxon test against 0.5 gave $p=0.67$. LS reached 0.535 within bins. Adding the displacement and LS to a cross-validated logistic model on the margin changed AUROC by $+0.003$ on average. Over the whole evaluation set the displacement is almost uncorrelated with the margin (Spearman $-0.02$ for CNNs). The images that flip are therefore mostly those with small margins, and the margin already ranks them. This accounts for the negative results of Section~\ref{subsec:e3}. On clean data the operators move correct and incorrect images by similar amounts, and whether a move causes a flip is decided by a quantity the clean output already contains.}

\subsection{The certificate and the bounds}
\label{subsec:certres}

\rev{The certificate rests on the flip implication, which can be checked directly. The margin is a $\sqrt{2}$-Lipschitz function of the logits, and the measured ratio $|\gamma(\mathbf{x}_i^{(k)})-\gamma(\mathbf{x}_i)|/\|\Delta_{i,k}\|_2$ never exceeded $\sqrt{2}$ (largest value 1.414). No image with $\gamma(\mathbf{x}_i)>\sqrt{2}\|\Delta_{i,k}\|_2$ flipped. This confirms only that the logits were recorded and processed correctly (Remark~\ref{rem:withdraw}).}

\rev{Table~\ref{tab:cert} and Fig.~\ref{fig:cert} report the conformal certificate of Proposition~\ref{prop:conformal} on all 84 frozen runs. The guarantee held in every run at every $\beta$. The largest realised value of $P(\text{certified}\wedge\text{flip})$ was 0.003 at $\beta=0.05$. Certified images were also far more often correct (0.989 against 0.826 at $\beta=0.05$), so the certificate doubles as a cheap high-precision filter. Its coverage depends strongly on the setting. At $\beta=0.05$ it certified 1.2\% of CIFAR-10 images for CNNs but 63.5\% of STL-10 images for transformers, where the margins of the models are large relative to their displacements. As a check of exchangeability, we also tested $P(R(\mathbf{x})>\hat{q})\le\beta$ directly. It exceeded $\beta$ by more than 1.645 standard errors in 6 of 84 runs at each $\beta$, close to the 4.2 expected by chance at that level. Under shift the certificate certifies few images. On CIFAR-10 corruptions the calibration quantile computed on clean data certified only 0.02\% to 0.8\% of shifted images (against 0.1\% to 3.3\% of clean images in the same runs), and none of them flipped.}

\begin{table}[t]
\centering
\caption{Conformal no-flip certificate (Proposition~\ref{prop:conformal}) on 84 frozen runs. The guarantee is $P(\text{certified}\wedge\text{flip})\le\beta$.}
\label{tab:cert}
\small
\begin{tabular}{@{}cccccc@{}}
\toprule
$\beta$ & Share certified & $P(\text{cert.}\wedge\text{flip})$ & Worst run & Runs violating & Acc.\ certified / not \\
\midrule
0.01 & 12.6\% & 0.0000 & 0.001 & 0/84 & 0.977 / 0.841 \\
0.05 & 23.2\% & 0.0002 & 0.003 & 0/84 & 0.989 / 0.826 \\
0.10 & 29.5\% & 0.0005 & 0.005 & 0/84 & 0.992 / 0.814 \\
\bottomrule
\end{tabular}
\end{table}

\begin{figure*}[t]
\centering
\includegraphics[width=0.92\textwidth]{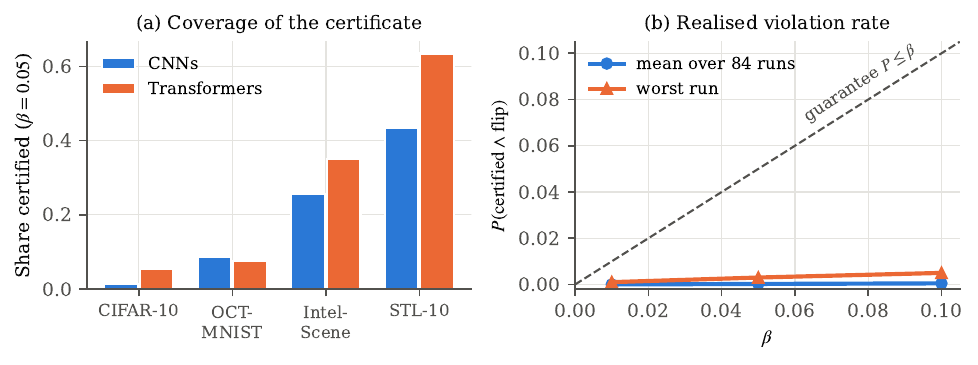}
\caption{Conformal no-flip certificate. (a) Share of evaluation images certified at $\beta=0.05$, by dataset and model family. (b) Realised $P(\text{certified}\wedge\text{flip})$ against the guarantee $\beta$ (dashed), as the mean over 84 runs and for the worst run.}
\label{fig:cert}
\end{figure*}

\rev{The certificate is a guarantee about flips, so a natural comparison is with per-sample flip bounds. We compared these bounds on tightness, calibration and validity. For each operator we bounded the flip probability of an image from its margin, using quantities measured on the calibration split only. We compared each bound with the observed flip rate in margin deciles of the evaluation split (2,880 decile cells on the 48 CNN runs; mean observed flip rate 0.054). Brier scores are computed per decile cell unless marked per image. A bound shaped like a Lipschitz flip lemma equals one when the margin is below the largest calibration displacement and zero otherwise. It averaged 0.599, was trivial (equal to one) in 55\% of cells and had a Brier score of 0.494. First- and second-moment Markov bounds averaged 0.484 and 0.388 (Brier 0.273 and 0.225). The empirical tail $P_{\mathrm{cal}}(\sqrt{2}\|\Delta_k\|_2\ge\gamma)$ averaged 0.229 with Brier 0.098 but was exceeded in 0.07\% of cells, because it is not a finite-sample guarantee. The DKW correction (95\% confidence) gives a finite-sample guarantee. The corrected bound averaged 0.268 (Brier 0.148 per image) in this comparison. Evaluated on all 84 frozen runs, CNNs and transformers, it averaged 0.235 (Brier 0.124 per image) and was exceeded in none of 5,040 cells. The calibrated bounds are informative in a way the lemma-shaped bound is not, although they remain upper bounds about four to five times the observed flip rate.}

\subsection{Distribution shift: where consistency helps and where it does not}
\label{subsec:shift}

\rev{The first question is whether the clean-data ordering survives a shift. Under the self-generated CIFAR-10 corruptions, CS detected errors less well than MSP (0.737 against 0.815; margin 0.807, energy 0.767). Adding LS and CD to a confidence-and-margin model left the AUROC unchanged at 0.814. On the official CIFAR-10-C with the 14 natively trained models, pooled over all 19 corruptions and five severities, CS (0.828) was again slightly below MSP (0.835), DOCTOR (0.836) and negative entropy (0.837).}

\rev{Pooled results hide differences between corruption families (Fig.~\ref{fig:c10c}a). On the four noise corruptions (Gaussian, shot, speckle, impulse), CS was \emph{better} than the margin by 0.05 to 0.11 AUROC. On blur it was worse by about 0.04, and on weather and digital corruptions it was close to or below the margin. The difference follows how badly MSP is misled. Across the 19 corruptions, the gain from adding DCF features to confidence features grew with MSP over-confidence (mean confidence minus accuracy) with Spearman $\rho=0.88$ ($p=6.5\times10^{-7}$; Fig.~\ref{fig:c10c}b). Noise is where confidence is most inflated (over-confidence 0.38 on average) and where the gain is largest (0.13 when the combiner is fitted on the same corruption). The most useful single operator under noise was blur. The score computed from blur alone reached 0.878 against 0.753 for the margin. Blur partly removes the noise, so a prediction that changes after blurring is likely to have been driven by the noise. This is the mechanism of feature squeezing~\cite{xu2018feature}. It is also the situation in which the condition of Proposition~\ref{prop:redundancy} fails: at the same corrupted clean logits, correct and incorrect predictions respond differently to a denoising operator.}

\begin{figure*}[t]
\centering
\includegraphics[width=0.92\textwidth]{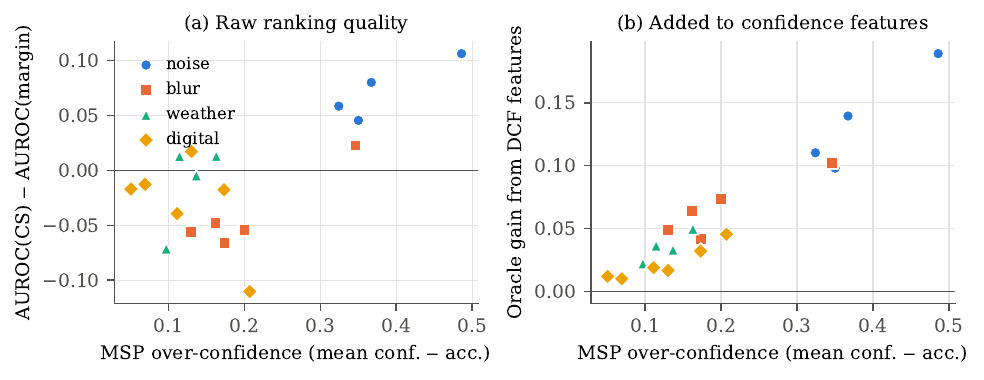}
\caption{Official CIFAR-10-C, 14 natively trained models, one point per corruption (mean over models, seeds and severities). (a) AUROC of CS minus AUROC of the margin. (b) Gain in cross-validated AUROC from adding DCF features to confidence features when the combiner is fitted on labelled data from the same corruption. Both grow with MSP over-confidence; noise corruptions are the only family where CS beats the margin outright.}
\label{fig:c10c}
\end{figure*}

\rev{The gain in Fig.~\ref{fig:c10c}b uses labelled examples of the corruption being tested, and deployed systems typically lack such examples. When the combiner was trained on clean data only, the gain on noise corruptions fell to $+0.009$. When it was trained on all other corruption families and tested on noise, the gain was $+0.002$ (not significant). For blur it became $-0.046$ and for digital corruptions $-0.012$. Only when other noise corruptions were in the training set (leave-one-corruption-out) did a gain survive on noise ($+0.067$). Consistency features therefore help only under a shift that has been observed and labelled.}

\rev{Changing the operators does not remove this limitation. Increasing the strength of all six operators made CS worse, not better. Its AUROC fell from 0.903 at level 1 to 0.755 at level 5, while MSP stayed at 0.920. When the level was chosen on the calibration split, level 1 (the setting of Table~\ref{tab:operators}) was selected in 40 of 42 runs.}

\rev{Finally, we asked whether clean-data scores predict accuracy under shift. For label-free accuracy estimation under the self-generated shift, average thresholded confidence with MSP had a mean absolute error of 0.049, against 0.124 when CS replaced MSP and 0.091 for average confidence. At the model level, in the 13-model zoo, clean accuracy predicted accuracy on held-out corruption families almost perfectly (Spearman 0.99), and clean CS added nothing once accuracy was controlled for (partial Spearman $+0.08$). Clean CS did correlate with effective robustness, the residual from the accuracy-on-the-line fit~\cite{miller2021accuracy} (Spearman 0.79). The correlation was driven by one anomalous model, the ShuffleNetV2 head with ECE 0.59. With that model removed and the line refitted on a logit scale, it fell to 0.19 ($p=0.56$).}

\subsection{Thresholds and regimes}
\label{subsec:thresholds}

\rev{The same question applies to the regime labels of CAG. The labels of Eq.~(\ref{eq:cag}) depend on $\delta$. Moving $\delta$ from 0.02 to 0.10 changes the label of 11 of the 16 CNN configurations. At $\delta=0.02$, 15 are consistency-dominant, and at $\delta=0.10$ only the four OCTMNIST models are. No configuration is accuracy-dominant at any $\delta$ in the grid. The clean-data quantities are also evaluated as predictors of behaviour under shift. A shift alarm with a threshold chosen on some corruptions and tested on others reached a balanced accuracy of 0.72 with CS, 0.76 with LS, 0.91 with the margin and 0.94 with MSP. Across the CIFAR-10 runs, clean CAG did not predict the drop in accuracy under shift (Spearman 0.09, $p=0.78$). A consistency floor $\tau$ on mean CS, used to separate unstable from stable models, would not bind. Every configuration in Table~\ref{tab:reference} has mean CS above 0.91, and the only heads below any reasonable floor are the artificial stochastic-dropout heads of Table~\ref{tab:cdvariants}. Neither $\delta$ nor $\tau$ is validated on independent data, so $\delta$ is a descriptive tolerance without an operational interpretation.}

\subsection{Adversarial and out-of-distribution inputs}
\label{subsec:adv}

\rev{Adversarial and OOD inputs test the settings in which confidence is expected to fail. Table~\ref{tab:adv} summarises the pre-specified tests on the 14 natively trained models. Prediction (i) was confirmed. Against non-adaptive PGD, CS separated clean from attacked images with AUROC 0.83, 0.80 and 0.73 at $\epsilon_\infty=2$, 4 and 8/255, while MSP fell to 0.370, 0.105 and 0.023, because the attack makes the wrong class very confident. CS beat MSP and the margin in 42 of 42 runs at every budget. It also ranked errors on the mixed clean-plus-attacked set far better (0.87 against 0.32 at $\epsilon_\infty=2/255$). CS was not the best detector, however. The displacement of the penultimate embedding under the same operators reached 0.89 to 0.94, and the Mahalanobis score overtook CS at $\epsilon_\infty=8/255$ (0.88 against 0.73). Predictions (ii) and (iii) were also confirmed. On SVHN, CS (0.841) was below energy and Mahalanobis (both 0.889), and on CIFAR-100 it was below MSP (0.844 against 0.853). One result was not predicted: a CS computed from the noise operator alone detected SVHN better than MSP (0.910 against 0.890, 39 of 42 runs). Prediction (iv) failed in one respect, since TTA mean confidence beat MSP on clean data (0.939 against 0.920). This score averages views rather than measuring their disagreement (Section~\ref{subsec:e3}).}

\begin{table}[t]
\centering
\caption{Pre-specified adversarial and OOD tests on 14 natively trained CIFAR-10 models (42 runs). AUROC for separating clean from attacked or OOD images (higher is better). The attack is non-adaptive $\ell_\infty$ PGD.}
\label{tab:adv}
\small
\setlength{\tabcolsep}{4pt}
\begin{tabular}{@{}lccccc@{}}
\toprule
Score & PGD $2/255$ & PGD $4/255$ & PGD $8/255$ & SVHN & CIFAR-100 \\
\midrule
MSP & 0.370 & 0.105 & 0.023 & 0.890 & 0.853 \\
Energy & 0.366 & 0.106 & 0.022 & 0.889 & 0.853 \\
Mahalanobis & 0.723 & 0.800 & \textbf{0.875} & 0.889 & 0.801 \\
CS, all six operators & 0.829 & 0.800 & 0.726 & 0.841 & 0.844 \\
CS, noise operator only & 0.583 & 0.224 & 0.061 & \textbf{0.910} & 0.846 \\
LS & 0.761 & 0.774 & 0.709 & 0.684 & 0.725 \\
Embedding drift & \textbf{0.914} & \textbf{0.936} & 0.888 & 0.768 & 0.811 \\
\bottomrule
\end{tabular}
\end{table}

\rev{Detection against an attack that ignores the detector is weak evidence~\cite{tramer2020adaptive}. Under the adaptive attack of Eq.~(\ref{eq:adaptive}) (Table~\ref{tab:adaptive}, Fig.~\ref{fig:adv}b), the attacker can evade DCF. With $\lambda=5$ the detection AUROC of CS drops to 0.09 at $\epsilon_\infty=4/255$ and 0.05 at $\epsilon_\infty=8/255$, below chance, because the attacked images become more stable than clean ones. Evasion has a cost. With $\lambda=5$ only 6\% (at $4/255$) and 9\% (at $8/255$) of attacks still change the prediction, against 98\% and 100\% for plain PGD. At every $\lambda$, CS ranked the resulting errors well (AUROC 0.79 to 0.87), while MSP did not (0.06 to 0.50). Under this one objective, the attacker can either fool the model or stay stable under the operators, but rarely both. Stronger adaptive objectives were not tested, and no claim of adversarial robustness is made.}

\begin{table}[t]
\centering
\caption{Adaptive PGD against DCF (Eq.~(\ref{eq:adaptive})) on four natively trained models, three seeds each. Detection: clean versus attacked. Error detection: correct versus incorrect predictions on attacked images.}
\label{tab:adaptive}
\small
\begin{tabular}{@{}cccccc@{}}
\toprule
$\epsilon_\infty$ & $\lambda$ & Attack success & CS detection & CS error det. & MSP error det. \\
\midrule
4/255 & 0 & 0.98 & 0.81 & 0.85 & 0.10 \\
4/255 & 1 & 0.70 & 0.67 & 0.87 & 0.23 \\
4/255 & 5 & 0.06 & 0.09 & 0.86 & 0.50 \\
8/255 & 0 & 1.00 & 0.76 & 0.80 & 0.06 \\
8/255 & 1 & 0.86 & 0.67 & 0.79 & 0.11 \\
8/255 & 5 & 0.09 & 0.05 & 0.82 & 0.43 \\
\bottomrule
\end{tabular}
\end{table}

\begin{figure*}[t]
\centering
\includegraphics[width=0.92\textwidth]{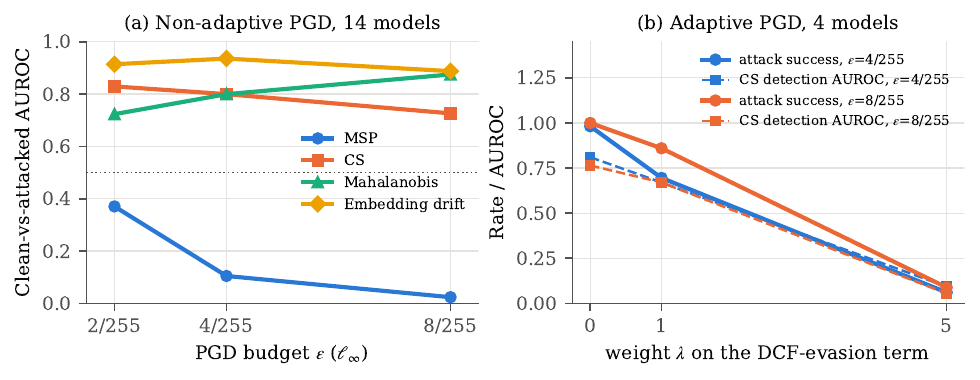}
\caption{Adversarial inputs. (a) Clean-versus-attacked AUROC under non-adaptive PGD; MSP falls below chance because the attack raises confidence. (b) Adaptive PGD (Eq.~(\ref{eq:adaptive})): as the weight $\lambda$ on stability grows, CS detection collapses, but so does the attack success rate.}
\label{fig:adv}
\end{figure*}

\subsection{Stable but wrong: OCTMNIST}
\label{subsec:octres}

\rev{The negative CAG on OCTMNIST is far larger than on the other datasets (Table~\ref{tab:reference}). One reading is backbone-domain mismatch: an ImageNet backbone with a short linear probe may produce stable features that do not separate retinal classes. Adaptation controls evaluate this reading directly (Table~\ref{tab:e6}).}

\begin{table}[t]
\centering
\caption{Adaptation controls (mean of three seeds). Frozen: ImageNet backbone with a linear head. ES: early stopping. The scratch model sees no ImageNet data. Validation-split CAG uses the official OCTMNIST validation split, which has the training prior.}
\label{tab:e6}
\small
\setlength{\tabcolsep}{4pt}
\begin{tabular}{@{}llcccccc@{}}
\toprule
Dataset & Model, adaptation & Acc & LS & CS & CAG (test) & CAG (val) \\
\midrule
OCTMNIST & ResNet-18, frozen, 5 epochs & 0.629 & 0.913 & 0.930 & $-0.301$ & $-0.128$ \\
 & ResNet-18, frozen, $\le$30 epochs + ES & 0.622 & 0.917 & 0.931 & $-0.310$ & $-0.124$ \\
 & ResNet-18, full fine-tuning & 0.742 & 0.877 & 0.898 & $-0.156$ & $-0.021$ \\
 & ResNet-18, from scratch (in-domain) & 0.753 & 0.953 & 0.959 & $-0.206$ & $-0.078$ \\
 & MobileNetV2, frozen, 5 epochs & 0.651 & 0.918 & 0.932 & $-0.281$ & $-0.098$ \\
 & MobileNetV2, full fine-tuning & 0.739 & 0.813 & 0.862 & $-0.123$ & $+0.002$ \\
\midrule
Intel-Scene & ResNet-18, frozen, 5 epochs & 0.911 & 0.978 & 0.977 & $-0.066$ & n/a \\
 & ResNet-18, full fine-tuning & 0.924 & 0.981 & 0.981 & $-0.058$ & n/a \\
 & ResNet-18, from scratch (in-domain) & 0.865 & 0.975 & 0.977 & $-0.112$ & n/a \\
\bottomrule
\end{tabular}
\end{table}

\rev{If weak adaptation caused the gap, stronger adaptation should remove it. It does not: CAG stayed below $-0.10$ in all 33 OCTMNIST runs. These are 11 configurations with three seeds each. Eight are frozen CNNs: each of the four CNNs of Table~\ref{tab:reference} with a head trained for 5 epochs, and with a head trained for up to 30 epochs with early stopping. The other three are ResNet-18 and MobileNetV2 with full fine-tuning, and ResNet-18 trained from scratch. The transformers are not among these runs. Table~\ref{tab:e6} lists the configurations involving ResNet-18 and MobileNetV2. Longer head training changed nothing. Full fine-tuning raised accuracy by about 0.1 and improved CAG by $0.145\pm0.077$ (ResNet-18) and $0.158\pm0.012$ (MobileNetV2), but the gap stayed at $-0.12$ to $-0.16$. On Intel-Scene the gap stayed between $-0.05$ and $-0.07$ for frozen and fine-tuned models and was larger ($-0.11$) for the less accurate model trained from scratch.}

\rev{Backbone-domain mismatch is also tested directly. A ResNet-18 trained only on OCTMNIST, which has no ImageNet features to mismatch, still had CAG $-0.206$. At matched accuracy the comparison is sharper. At the epoch where the in-domain model reached the accuracy of the frozen ImageNet model (0.620), its CAG was $-0.331$, against $-0.308$ for the frozen model. On Intel-Scene, a fine-tuned model at matched accuracy had the same gap as the frozen one ($-0.068$ against $-0.067$). These controls do not support backbone-domain mismatch as the sole explanation. They do not exclude a contribution from it, because each matched-accuracy comparison uses one point per trajectory.}

\rev{A second explanation is label shift, which accounts for part of the gap. The OCTMNIST training and validation splits have the class prior CNV 0.344, DME 0.105, drusen 0.080 and normal 0.472, while the official test split is balanced. Models trained on the skewed prior rarely predict the minority classes. Drusen recall was 12\% for the frozen ResNet-18 and 33\% after full fine-tuning, and the wrong predictions for drusen images were as stable as any others. On the validation split, which shares the training prior, the gap nearly vanishes for fine-tuned models ($-0.021$ and $+0.002$) and is much smaller for frozen ones (about $-0.10$ to $-0.13$; Table~\ref{tab:e6} and Fig.~\ref{fig:labelshift}a). Two experiments test this causally.}

\rev{First, we corrected the saved test logits for the prior shift by adding the difference $\log\pi_{\mathrm{test}}-\log\pi_{\mathrm{train}}$ between the log class priors of the test data ($\pi_{\mathrm{test}}$) and of the training data ($\pi_{\mathrm{train}}$)~\cite{saerens2002adjusting}. With the true (balanced) test prior, mean CAG over all OCTMNIST models and modes improved from $-0.264$ to $-0.162$ (32 of 33 runs improved), and drusen recall rose from 14\% to 53\%. With a prior estimated without labels by expectation-maximisation (EM), CAG improved to $-0.224$ (29 of 33 runs). Prior correction thus recovers about 40\% of the gap on OCTMNIST. The rest remains on the test split, and a residual of about $-0.10$ to $-0.13$ is present on the validation split for frozen models. There, CS saturates above accuracy.}

\rev{Second, we induced the same kind of shift on CIFAR-10, where the frozen ResNet-18 has a small gap. Heads were trained with two classes (bird and cat) cut to 10\% or 2\% of their images and tested on the balanced test set (Fig.~\ref{fig:labelshift}b and Table~\ref{tab:labelshift}). At 2\%, the rare classes were recognised 9\% of the time, yet their LS stayed at 0.82, and CAG fell from $-0.060$ to $-0.175$. Correcting the prior brought it back to $-0.083$, and re-weighting the test set to the training prior brought it to $-0.031$. Stable but wrong behaviour can thus be induced by label shift and largely removed by correcting it.}

\begin{table}[t]
\centering
\caption{Induced label shift on CIFAR-10 (ResNet-18, frozen, three seeds). Two classes are kept at the given fraction of their training images; the test set is balanced. The 100\% row is a separate run and differs slightly from the value in Table~\ref{tab:reference}.}
\label{tab:labelshift}
\small
\begin{tabular}{@{}cccccccc@{}}
\toprule
Kept & Acc & Rare recall & Rare LS & CAG & CAG, prior corrected & CAG, re-weighted \\
\midrule
100\% & 0.863 & 0.77 & 0.87 & $-0.060$ & $-0.060$ & $-0.060$ \\
10\% & 0.821 & 0.48 & 0.81 & $-0.101$ & $-0.070$ & $-0.040$ \\
2\% & 0.749 & 0.09 & 0.82 & $-0.175$ & $-0.083$ & $-0.031$ \\
\bottomrule
\end{tabular}
\end{table}

\begin{figure*}[t]
\centering
\includegraphics[width=0.92\textwidth]{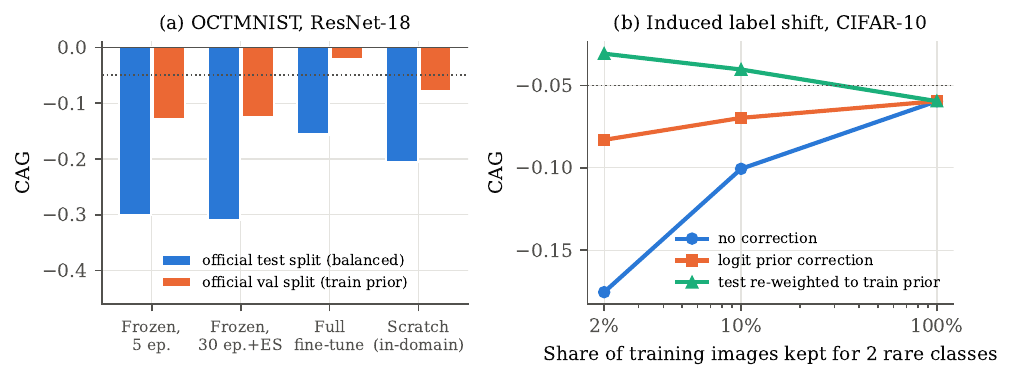}
\caption{Stable but wrong. (a) CAG of ResNet-18 on OCTMNIST under four adaptation modes, on the balanced test split and on the validation split that shares the training prior. (b) Induced label shift on CIFAR-10: CAG worsens as two classes become rare, and prior correction or re-weighting removes most of the effect. The dotted line marks CAG $=-0.05$, the boundary for $\delta=0.05$.}
\label{fig:labelshift}
\end{figure*}

\section{Discussion}
\label{sec:discussion}

\subsection{What the results say about perturbation consistency}

\rev{On clean data and under shifts that the system has not seen, perturbation consistency was a costlier and noisier re-expression of confidence. This held for frozen and fully trained CNNs, for transformers, for a varied zoo of architectures and against small ensembles. Proposition~\ref{prop:redundancy} and the within-decile test of Section~\ref{subsec:mechanism} explain why. Whether a mild operator flips a prediction is decided mostly by the clean margin, and at a fixed margin the size of the displacement says almost nothing about correctness. A practitioner who wants to rank predictions by trustworthiness should therefore use MSP, the margin or the entropy. TTA mean confidence is a further option if a few extra forward passes are affordable.}

\rev{The exceptions follow the same logic. Proposition~\ref{prop:redundancy} fails when the clean output is itself corrupted, so that two images with the same clean logits differ in how much of that output is real. Additive noise and adversarial perturbations both raise confidence in the wrong class, and a smoothing operator partly undoes them. In those settings consistency carried information that confidence lacked, the size of the gain followed the degree of over-confidence, and a pre-specified prediction was confirmed. DCF is therefore best applied as a check on inputs whose confidence is known to be unreliable, such as noise corruptions and adversarial perturbations. It is preferably computed in feature space rather than from predicted labels. Its limits against adaptive attackers are stated in the results.}

\subsection{What CAG measures}

\rev{Rewriting CAG clarifies its behaviour. Because $\mathrm{CAG}=\mathrm{Acc}-\bar{C}=(1-\bar{C})-(1-\mathrm{Acc})$, the gap is the mean instability minus the error rate. For the models in Table~\ref{tab:reference}, mean instability $1-\bar{C}$ was 0.01 to 0.09, so CAG is negative whenever the error rate exceeds that, which is almost always. A large negative CAG therefore says that many predictions are wrong but stable. Section~\ref{subsec:octres} shows one common reason: a class-prior mismatch that makes the model confidently and consistently predict the majority classes. CAG is thus best read as a prompt to check class priors and per-class recall, and not as a summary of reliability. It does not predict accuracy loss under shift, and its regime labels depend on an arbitrary $\delta$.}

\subsection{Scope and limitations}
\label{subsec:limits}

\rev{The conclusions apply to a defined regime. Most experiments use images at CIFAR-10 to STL-10 scale, and ImageNet-scale backbones serve as frozen feature extractors. The six operators are mild by design, and stronger settings made CS worse (Section~\ref{subsec:shift}). Label preservation under these operators was screened automatically with a vision-language model on three of the four datasets. It was not validated by human raters, and the study does not claim that every perturbed image preserves its label.}

\rev{Several conditions limit how far the model comparisons extend. The natively trained models and the zoo backbones were trained once by others, so for them the seeds vary the sampling and the noise but not the training. The ensembles are last-layer ensembles on shared features, which are less diverse than independently trained deep ensembles. The adaptive attack uses one objective with 20 steps. Stronger or differently weighted attacks may break the trade-off observed here, and no claim of adversarial robustness is made. The conformal certificate gives a marginal guarantee under exchangeability. On OCTMNIST the calibration images come from the validation split, whose class prior differs from the balanced test split, so the guarantee there is not formally covered, even though no violations occurred.}

\rev{The corruption and OCTMNIST analyses have their own scope. The zoo analysis uses self-generated corruptions and not the official CIFAR-10-C files. Flip rates are measured for the six operators, and ImageNet-P, which exists only for ImageNet, is not used. Prior correction recovers part of the OCTMNIST gap, and the remainder is described and not explained. The in-domain control is a network trained from scratch, not a backbone pretrained on retinal images such as RETFound~\cite{zhou2023retfound}.}

\section{Conclusion}
\label{sec:conclusion}

\rev{We asked when the decision consistency of an image classifier, measured under six mild perturbations, says something about the correctness of a prediction that the clean softmax output does not.}

\rev{On clean data the answer was negative. Across 201 evaluations, the consistency score was below MSP for misclassification detection in every model family. It did not add information when combined with MSP, and it cost 8 to 15 times more compute. An exact inequality explains this redundancy: a label flip requires the clean logit margin to be at most the square root of two times the norm of the logit displacement. The same inequality yields a label-free conformal certificate, whose guarantee held in every run.}

\rev{The negative answer has two exceptions. Consistency did add information where confidence is misled, namely under noise corruptions and adversarial perturbations. The gain did not transfer to unseen kinds of shift, and an adaptive attacker can trade attack success for stability.}

\rev{A separate finding concerns the consistency-accuracy gap. The gap is large and negative on OCTMNIST, and backbone mismatch alone does not explain it. Part of it is a class-prior effect. Prior correction recovers about 40\% of the gap, and inducing label shift on CIFAR-10 reproduces the pattern. The protocol provides a baseline-controlled way to evaluate future perturbation-based reliability scores.}

\section*{CRediT authorship contribution statement}
\textbf{Amit Kalita:} Conceptualization, Methodology, Software, Investigation, Formal analysis, Visualization, Writing \rev{(original draft)}. \textbf{Moirangthem Tiken Singh:} Conceptualization, Methodology, Supervision, Validation, Writing \rev{(review and editing)}.

\section*{Declaration of Competing Interest}
The authors declare that they have no known competing financial interests or personal relationships that could have appeared to influence the work reported in this paper.

\section*{Declaration of Generative AI and AI Assisted Technologies in the Writing Process}
During the preparation of this work the authors used generative artificial intelligence tools to improve language clarity and to help with analysis code. After using these tools, the authors reviewed and edited the content and code as needed and take full responsibility for the content of the publication.

\section*{Data Availability}
All datasets used in this study are publicly available from their original creators: CIFAR-10, CIFAR-10-C, CIFAR-100, STL-10, SVHN, the Intel Image Classification dataset and OCTMNIST (MedMNIST v2).

\section*{Code Availability}
The code for every experiment, the logged clean and perturbed logits of every run, the per-sample tables and the operator audit files are available from the corresponding author and will be released in a public repository upon publication.

\bibliographystyle{IEEEtran}
\bibliography{ref}

\end{document}